\documentclass{article}

\usepackage{arxiv}
\usepackage{geometry}
\usepackage[authoryear,round]{natbib}
\usepackage{amsmath, amssymb}
\usepackage{amsthm}
\newtheorem{proposition}{Proposition}
\DeclareMathOperator*{\argmin}{arg\,min}
\usepackage{graphicx}
\usepackage{multirow}
\usepackage{tabularx}
\usepackage{booktabs}
\usepackage{caption}
\usepackage{float}
\usepackage[dvipsnames]{xcolor}
\usepackage{stmaryrd} 
\usepackage{algorithm,algpseudocode} 
\usepackage{bm,amsmath}              
\usepackage{wrapfig}
\usepackage[utf8]{inputenc} 
\usepackage[T1]{fontenc}    
\usepackage{hyperref}       
\usepackage{url}            
\usepackage{booktabs}       
\usepackage{amsfonts}       
\usepackage{nicefrac}       
\usepackage{microtype}      
\usepackage{lipsum}
\usepackage{graphicx}
\graphicspath{ {./images/} }

\title{Low-Rank and Structured Sparse Tensor Decomposition for Anomaly Detection in Multivariate Functional Data}

\author{
  Mohammad N. Bisheh\thanks{These authors contributed equally to this work.} \\
  School of Industrial and Systems Engineering\\
  Georgia Institute of Technology\\
  Atlanta, GA\\
  \And
  Che-Yi Liao\footnotemark[1] \\
  School of Industrial and Systems Engineering\\
  Georgia Institute of Technology\\
  Atlanta, GA\\
  \And
  Kamran Paynabar \\
  Stewart School of Industrial and Systems Engineering\\
  Georgia Institute of Technology\\
  Atlanta, GA
}

\begin{document}
\maketitle
\begin{abstract}
Multivariate functional data arise in many modern manufacturing systems, where multiple sensors record densely sampled process trajectories. Monitoring such data is challenging because nominal variation is strongly correlated across samples, sensors, and time, while faults may appear either as isolated deviations or as structured departures concentrated within a limited number of sensor-specific temporal trajectories. We propose two unsupervised sparse tensor decomposition methods that preserve this multimode structure. Entrywise Sparse CP Decomposition (ES-CP) uses an entrywise \(\ell_1\) penalty to identify localized anomalies, whereas Fiberwise Sparse-Group Lasso CP Decomposition (FG-Lasso) combines entrywise and fiberwise penalties to detect both localized deviations and anomalies concentrated within temporal fibers. Both methods represent nominal process behavior through a low-rank CP decomposition and are estimated using alternating optimization with closed-form sparse-component updates. Two simulation studies evaluate performance under different fault structures, signal severities, noise levels, and missing observations. FG-Lasso attains or ties the highest macro F$_1$ score in almost all settings in the first study and achieves the highest macro F$_1$ score. In a multichannel forging-process case study, FG-Lasso and ES-CP obtain macro F$_1$ scores of 0.85 and 0.82, respectively, compared with 0.69 or lower for TRPCA and PCA-based anomaly detectors. The results demonstrate that explicitly matching the sparse penalty to the anticipated fault structure improves both anomaly detection and fault localization in high-dimensional functional processes.
\end{abstract}

\keywords{Functional data analysis \and Profile monitoring \and High-dimensional process modeling \and Anomaly detection}

\section{Introduction}
\label{sec:intro}

Functional data analysis (FDA) and profile monitoring have gained significant attention due to advances in sensor technologies generating large volumes of continuous data across time or space. Functional data consists of measurements taken over a continuous domain, such as time or space, resulting in data represented by curves or surfaces rather than discrete points \citep{ramsay2005principal, kokoszka2017introduction}. This contrasts with traditional scalar data, which capture only single-point measurements, limiting their ability to fully reflect dynamic processes. Consequently, functional data offer richer information, enabling early detection and diagnosis of subtle process anomalies that would otherwise go unnoticed. However, this advantage comes with analytical challenges due to the inherent high-dimensionality, complexity, and correlation present in functional data.

Anomaly detection is a critical requirement in modern manufacturing and industrial processes, as it enables early identification and prevention of faults, ensuring product quality and operational reliability. In recent years, significant advances have been made in imrpoving the detection and diagnosis of anomalies in complex and high-dimensional (HD) data streams. These improvements range from change-point detection methods tailored to multivariate profiles, such as those described by \citep{paynabar2016change}, to robust statistical approaches specifically designed for functional data \citep{capezza2024robust}. Such methods address various forms of process anomalies by accurately detecting shifts or deviations in the underlying functional structures, making it possible to mitigate issues before they result in costly downtime or quality degradation.

A particularly illustrative example of functional data arises in a multi-operation forging press process described in previous studies by \cite{paynabar2016change} and \cite{yan2018real}. As shown in Figure \ref{fig:tonnageintro}, this forging press has four strain gauges mounted on each of its columns, continuously recording force measurements throughout a full $360^{\circ}$ crank angle cycle. Each cycle generates a set of four detailed curves that collectively form a multivariate functional profile. These profiles encode critical process information, reflecting die conditions, load balance, and material properties, thus enabling the timely detection of anomalies such as die wear or material inconsistency. Detecting these anomalies early prevents costly downtime and ensures consistent product quality and process reliability.

\vspace{10pt}
\begin{figure}[htbp]
  \centering
  \includegraphics[width= 0.8\linewidth]{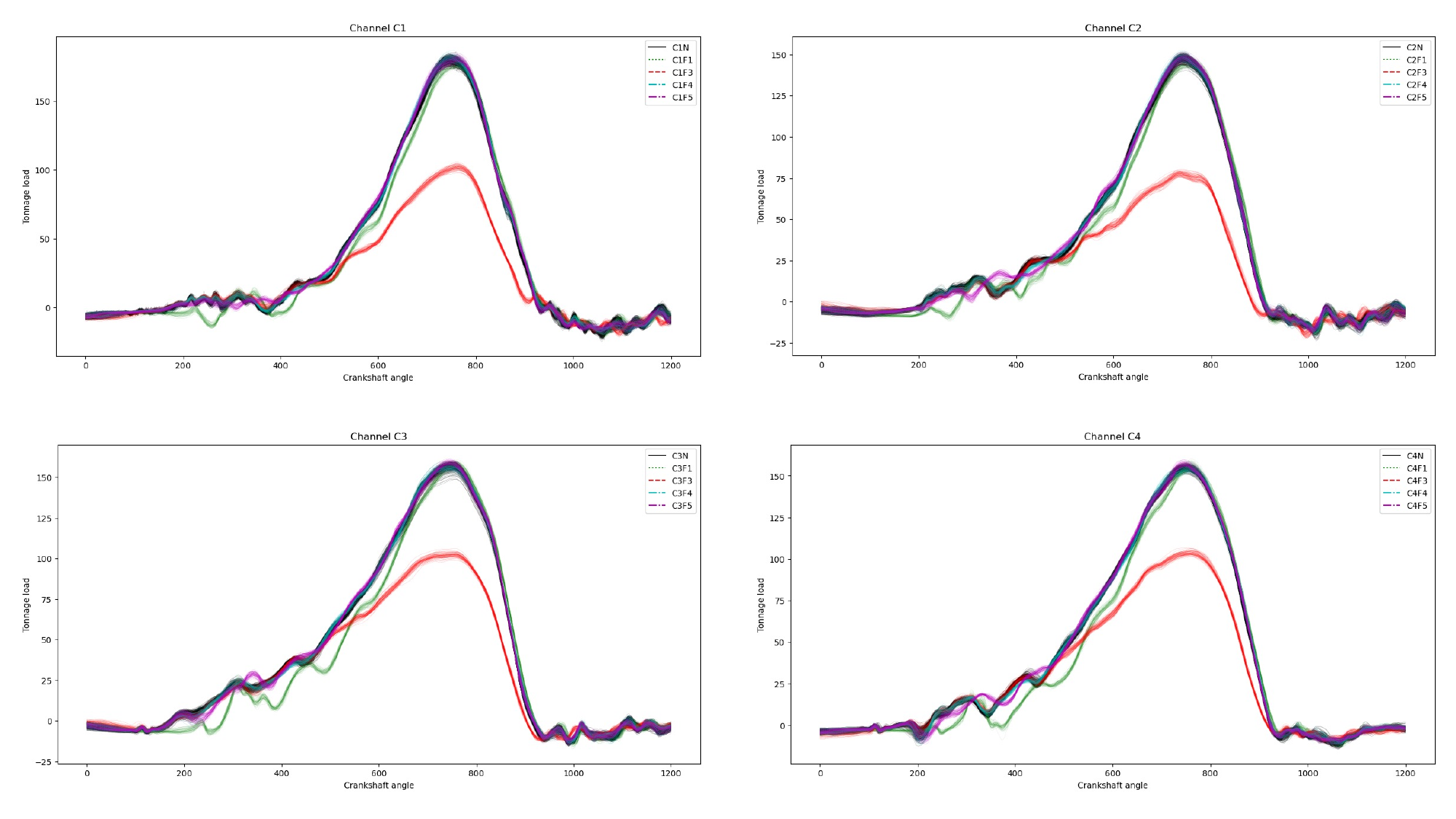}
  \caption{\scriptsize Overlay of four-channel tonnage profiles from a multi-operation forging press. Black curves denote healthy strokes (C1N); colored curves correspond to five engineered fault modes (C1F1--C1F5).}
  \label{fig:tonnageintro}
\end{figure}
\vspace{10pt} 

Analyzing multivariate functional data involves observing multiple correlated variables continuously or discretely across space or time. This type of data presents significant challenges in process monitoring. Traditional statistical process control (SPC) methods, which were originally developed for scalar or low-dimensional settings, are inadequate when applied to these complex, high-dimensional (HD) data streams. This inadequacy arises primarily from their inability to effectively handle intricate correlations and dependencies inherent in multivariate functional data, often leading to reduced sensitivity in detecting anomalies and elevated false alarm rates \citep{menafoglio2018profile, gomez2020functional, centofanti2021functional}. Although conventional dimension reduction approaches such as functional principal component analysis (FPCA) can mitigate some of these challenges by summarizing primary modes of variation, they still face critical limitations. Specifically, FPCA tends to obscure localized anomalies because such anomalies can affect multiple principal components simultaneously, making their detection challenging amid normal variability \citep{capezza2024robust, fang2021multi, wang2025tensor}.

Recent methodological advancements have partially overcome these limitations. For example, tensor decomposition methods have emerged to address these analytical challenges more effectively. Unlike functional data represented by curves or surfaces, tensor data naturally extend this representation into higher dimensions, incorporating multiple correlated modes (e.g., sensors, time, spatial location) simultaneously. Thus, tensors explicitly preserve multi-dimensional structure and interactions among modes, making them particularly suitable for multivariate functional monitoring. Recent work such as \citep{gahrooei2021multiple} demonstrated the potential of tensor-based approaches to extract comprehensive features and between mode relationships. Tensor decomposition methods, such as CANDECOMP/PARAFAC (CP) and Tucker decompositions, capture essential structural characteristics and facilitate isolation of anomalous components, significantly enhancing anomaly detection capabilities.

Motivated by these limitations, we develop two methods for unsupervised monitoring of multivariate
functional observations. The first method, termed Entrywise Sparse CP Decomposition (ES-CP), models localized anomalies through an entrywise \(\ell_1\) penalty on the anomaly tensor. The second method, termed Fiberwise Sparse-Group Lasso CP Decomposition (FG-Lasso), combines entrywise and fiberwise penalties to detect deviations concentrated within a limited number of sample-sensor temporal trajectories. In both methods, the nominal process behavior is represented by a low-rank CP tensor, while the sparse component provides direct localization of anomalous samples, sensors, and time points.

The main contributions of this study are threefold. First, we formulate two structured models that distinguish between localized entrywise anomalies and deviations concentrated within temporal fibers.
Second, we develop alternating computational procedures with closed-form updates for the corresponding sparse components. Third, we evaluate the
methods through two simulation studies and a multichannel forging-process case study, comparing them with TRPCA and three PCA-based anomaly detectors:
Isolation Forest, Local Outlier Factor, and One-Class SVM. The results show that ES-CP is particularly effective for localized faults, whereas FG-Lasso
provides the strongest overall performance when anomalies exhibit structured dependence across temporal fibers.

The remainder of the paper is organized as follows.
Section~\ref{sec:lite} reviews the literature on functional profile monitoring, tensor decomposition, and robust anomaly detection. Section~\ref{sec:method} introduces the proposed ES-CP and FG-Lasso models and presents their corresponding optimization procedures. Section~\ref{sec:Simulation} evaluates the methods through two simulation studies, and Section~\ref{sec:Case} presents the multichannel forging-process case study. Finally, Section~\ref{sec:Conclusion} summarizes the main findings and discusses directions for future research.

\section{Literature Review}
\label{sec:lite}

Advancements in sensor technology have significantly increased the complexity and dimensionality of monitoring data generated by manufacturing and industrial processes. Consequently, sophisticated methods are required to manage and analyze these data effectively. Classical statistical process monitoring (SPM) methods, such as Hotelling’s $T^2$ \citep{oyegoke2025hotelling}, multivariate exponentially weighted moving average (MEWMA) \citep{sun2024ewma, fang2022online}, and multivariate cumulative sum (MCUSUM) \citep{ye2022generic, ye2024online} charts, historically provided effective solutions for scalar or low-dimensional multivariate data \citep{perdikis2019survey, ahmed2025comprehensive}. However, these methods have significant limitations when dealing with complex, high-dimensional, and correlated functional data streams, as they cannot adequately capture structural dependencies across spatial or temporal dimensions \citep{jalilibal2024control}. To partially address the dimensionality challenge, traditional dimension-reduction techniques such as functional principal component analysis (FPCA) have been widely utilized. Nevertheless, FPCA lacks robustness to outliers, which can dominate the principal component space and obscure critical process variations \citep{capezza2024robust, maronna2019robust}.

To overcome limitations associated with high-dimensionality, recent works have introduced sampling-based monitoring frameworks. \citep{nabhan2021correlation} proposed a correlation-based dynamic sampling method, which adaptively selects subsets of variables based on their correlation structures and monitoring statistics contributions. This approach significantly reduces sampling costs and computational burdens by exploiting redundancy among correlated sensors. However, it relies heavily on accurate correlation matrix estimation and operates under the assumption of an existing monitoring model, without explicitly modeling or separating smooth signals from anomalous components. Furthermore, it does not inherently detect structured or temporally localized sparse anomalies common in real-time monitoring scenarios.

Sparse decomposition methods, particularly those utilizing $l_1$-norm and fused Lasso penalties, have been effective at isolating sparse anomalies from smooth backgrounds. For instance, \citep{yan2018real} introduced a spatio-temporal smooth sparse decomposition (ST-SSD) method for real-time monitoring of high-dimensional tensor streams, demonstrating enhanced detection capability for spatial and temporal anomalies in manufacturing processes. Similarly, \citep{yang2025anomaly} developed a constrained low-rank plus sparse decomposition tailored to photovoltaic (PV) image data, effectively identifying defect regions against smooth solar panel backgrounds through physically motivated constraints. Additionally, to handle high computational demands and data transfer costs associated with high-resolution image streams, \citep{mou2023compressed} proposed Compressed Smooth Sparse Decomposition (CSSD), embedding the sparse-smooth model within a compressive sensing framework to enable lightweight and rapid anomaly detection.

Tensor decomposition techniques, including CP and Tucker decompositions, further enhance the analysis of multi-dimensional functional data by explicitly preserving structural information across multiple modes. Recent developments such as functional tensor singular value decomposition (SVD) \citep{han2024guaranteed} and tensor-on-tensor regression \citep{gahrooei2021multiple} have expanded analytical capabilities, enabling predictive modeling and robust feature extraction from heterogeneous data sources. By explicitly maintaining multi-mode data structures, tensor methods mitigate limitations inherent in vectorized approaches, significantly improving interpretability and accuracy in anomaly detection tasks.

Incorporating structural assumptions into anomaly detection methods through regularization techniques such as Group Lasso and fused Lasso further enhances interpretability and effectiveness. For example, \citep{li2025real} introduced a spatio-temporal weighted graphical Lasso approach that captures temporal consistency and improves anomaly localization in dynamic tensor data. Similarly, \citep{qiu2023spatio} proposed spatial Lasso-based monitoring methods that enforce smoothness across sensor networks, thereby increasing detection sensitivity and interpretability. Despite these advantages, such methods typically rely on predefined graph structures or strong smoothness assumptions, potentially limiting their adaptability to evolving or unknown industrial processes.

Several additional studies have addressed anomaly detection through decomposition-based methods tailored to high-dimensional spatio-temporal data. \citep{shen2022robust} developed the Smooth Sparse Robust Tensor Decomposition (SS-RTD) model for separating background, foreground, and noise components in noisy videos from additive manufacturing processes. This method explicitly models background dynamics via low-rank Tucker decomposition, foreground structures through spatio-temporal smoothness, and noise using $l_1$-norm sparsity, achieving state-of-the-art denoising and segmentation performance. Complementarily, \citep{gomez2022adaptive} introduced an adaptive sampling framework that integrates smooth-sparse decomposition with a correlation-based sampling strategy for efficient online monitoring and diagnosis of high-dimensional data streams. While highly effective in reducing monitoring costs and improving computational scalability, this approach assumes pre-identified informative sensors and primarily focuses on spatial sparsity. Lastly, the Sparse Group Lasso Principal Component Analysis (SGL-PCA) developed by \citep{wang2022sgl} constructs health indices for mixed high-dimensional signals using structured group Lasso penalties and monotonicity constraints. Although particularly effective for health progression monitoring and residual life prediction, the SGL-PCA method emphasizes gradual, monotonic degradation patterns. Consequently, it may not generalize effectively to processes characterized by abrupt or localized faults.

Despite significant progress, existing methods rarely unify smoothness, sparsity, and interpretability within a single tensor-based monitoring framework. To address to critical gaps identified in existing methods for monitoring HD functional data, this paper proposes two robust and innovative tensor decomposition frameworks of ES-CP and FG-Lasso that effectively integrate low rank smoothness and structured sparsity for enhanced anomaly detection. Unlike traditional statistical process monitoring approaches, which often struggle to adequately handle HD correlations and are sensitive to outliers, our methods explicitly factorize observed functional data into a low rank tensor representing stable process dynamics and a sparse tensor isolating anomalies. The ES-CP model employs an entry-wise Lasso penalty tailored to pinpoint isolated anomalies such as brief spikes and measurement glitches, while the FG-Lasso extends this capability by incorporating a group-wise penalty that efficiently detects temporally or spatially extended anomalies affecting multiple sensors or time intervals. Through an alternating optimization strategy combining CP decomposition with closed form proximal updates, our methods achieve robust performance even in scenarios characterized by significant noise, missing data, and the absence of labeled training anomalies. Consequently, these proposed approaches significantly advance the SOTA in functional data monitoring by offering superior sensitivity to subtle yet critical anomalies, clearer interpretability of detected faults, and improved practical utility for real-time monitoring tasks within complex industrial processes.

\section{Method}\label{sec:method}

This section provides a detailed description of our proposed robust tensor decomposition approach designed to effectively analyze multivariate functional data. We first outline essential tensor notations and algebraic operations used throughout our formulation. Subsequently, we introduce two models, the ES-CP and the FG-Lasso. Finally, we detail the computational procedures to solve each model efficiently.

\subsection{Basic Notation and Multilinear Algebra}
\label{subsec:tensor_notation}

Let $\mathcal{X}\in\mathbb{R}^{I\times J\times K}$ denote the observed third-order tensor, where \(I\) is the number of samples, \(J\) is the number of sensors or channels, and \(K\) is the number of discretized time points. The entry \(X_{ijk}\) denotes the observation from sample \(i\), sensor \(j\), and time point \(k\). Accordingly, \(\mathcal{X}_{i::}\in\mathbb{R}^{J\times K}\) denotes the complete multivariate functional profile associated with sample \(i\), while \(\mathcal{X}_{ij:}\in\mathbb{R}^{K}\) denotes the temporal fiber associated with sample \(i\) and sensor \(j\).

Throughout the paper, tensors are denoted by calligraphic capital letters, such as \(\mathcal{X}\), matrices by bold capital letters, such as \(\mathbf{X}\), vectors by bold lowercase letters, such as \(\mathbf{x}\), and scalars by lowercase letters, such as \(x\). For a third-order tensor \(\mathcal{X}\), the Frobenius norm and entrywise \(\ell_1\) norm are defined, respectively, as
\begin{equation}
    \|\mathcal{X}\|_F
    =
    \left(
    \sum_{i=1}^{I}
    \sum_{j=1}^{J}
    \sum_{k=1}^{K}
    X_{ijk}^{2}
    \right)^{1/2},
    \qquad
    \|\mathcal{X}\|_1
    =
    \sum_{i=1}^{I}
    \sum_{j=1}^{J}
    \sum_{k=1}^{K}
    |X_{ijk}|.
    \label{eq:tensor_norms}
\end{equation}
The inner product between two tensors
\(\mathcal{X},\mathcal{Y}\in\mathbb{R}^{I\times J\times K}\) is
\begin{equation}
    \langle\mathcal{X},\mathcal{Y}\rangle
    =
    \sum_{i=1}^{I}
    \sum_{j=1}^{J}
    \sum_{k=1}^{K}
    X_{ijk}Y_{ijk},
\end{equation}
so that
\(\|\mathcal{X}\|_F^2=\langle\mathcal{X},\mathcal{X}\rangle\).
For a temporal fiber \(\mathcal{X}_{ij:}\), its Euclidean norm is denoted by
\[
    \|\mathcal{X}_{ij:}\|_2
    =
    \left(
    \sum_{k=1}^{K}X_{ijk}^{2}
    \right)^{1/2}.
\]

The outer product of three vectors
\(\mathbf{a}\in\mathbb{R}^{I}\),
\(\mathbf{b}\in\mathbb{R}^{J}\), and
\(\mathbf{c}\in\mathbb{R}^{K}\) is denoted by
\(\mathbf{a}\circ\mathbf{b}\circ\mathbf{c}\) and produces a rank-one tensor whose entries are
\begin{equation}
    \left(
    \mathbf{a}\circ\mathbf{b}\circ\mathbf{c}
    \right)_{ijk}
    =
    a_i b_j c_k.
\end{equation}
The Kronecker product is denoted by \(\otimes\). For matrices
\(\mathbf{A}\in\mathbb{R}^{I\times R}\) and \(\mathbf{B}\in\mathbb{R}^{J\times R}\), their Khatri--Rao product is the columnwise Kronecker product
\begin{equation}
    \mathbf{A}\odot\mathbf{B}
    =
    \begin{bmatrix}
        \mathbf{a}_1\otimes\mathbf{b}_1 &
        \cdots &
        \mathbf{a}_R\otimes\mathbf{b}_R
    \end{bmatrix}
    \in\mathbb{R}^{IJ\times R},
    \label{eq:khatri_rao}
\end{equation}
where \(\mathbf{a}_r\) and \(\mathbf{b}_r\) are the \(r\)th columns of \(\mathbf{A}\) and \(\mathbf{B}\), respectively. The elementwise product of two matrices of the same dimension is denoted by \(\ast\). A useful identity for the Khatri--Rao product is
\begin{equation}
    (\mathbf{A}\odot\mathbf{B})^\top
    (\mathbf{A}\odot\mathbf{B})
    =
    (\mathbf{A}^\top\mathbf{A})
    \ast
    (\mathbf{B}^\top\mathbf{B}).
    \label{eq:khatri_rao_gram}
\end{equation}

Let \(I_1=I\), \(I_2=J\), and \(I_3=K\). The mode-\(n\) unfolding, or matricization, of \(\mathcal{X}\) is denoted by
\begin{equation}
    \mathbf{X}_{(n)}
    \in
    \mathbb{R}^{I_n\times\prod_{m\neq n}I_m},
\end{equation}
and arranges the mode-\(n\) fibers of \(\mathcal{X}\) as columns of a matrix. For a tensor
\(\mathcal{T}\in\mathbb{R}^{I_1\times\cdots\times I_N}\) and a matrix
\(\mathbf{U}\in\mathbb{R}^{L\times I_n}\), the mode-\(n\) product
\[
    \mathcal{T}'=\mathcal{T}\times_n\mathbf{U}
\]
produces a tensor of dimension
\[
    I_1\times\cdots\times I_{n-1}\times L
    \times I_{n+1}\times\cdots\times I_N.
\]
Its mode-\(n\) unfolding satisfies
\begin{equation}
    \mathbf{T}'_{(n)}
    =
    \mathbf{U}\mathbf{T}_{(n)}.
    \label{eq:mode_n_product}
\end{equation}

The nominal process structure is represented by a low-rank tensor \(\mathcal{L}\). We approximate \(\mathcal{L}\) using a rank-\(R\) CP decomposition:
\begin{equation}
\begin{aligned}
    \mathcal{L}
    &=
    \llbracket
    \boldsymbol{\lambda};
    \mathbf{A}^{(1)},
    \mathbf{A}^{(2)},
    \mathbf{A}^{(3)}
    \rrbracket \\
    &=
    \sum_{r=1}^{R}
    \lambda_r\,
    \mathbf{a}^{(1)}_r
    \circ
    \mathbf{a}^{(2)}_r
    \circ
    \mathbf{a}^{(3)}_r,
\end{aligned}
\label{eq:cp_full}
\end{equation}
where
\[
    \boldsymbol{\lambda}
    =
    (\lambda_1,\ldots,\lambda_R)^\top
    \in\mathbb{R}_{+}^{R}
\]
contains the component weights, and
\[
    \mathbf{A}^{(1)}\in\mathbb{R}^{I\times R},
    \qquad
    \mathbf{A}^{(2)}\in\mathbb{R}^{J\times R},
    \qquad
    \mathbf{A}^{(3)}\in\mathbb{R}^{K\times R}
\]
are the factor matrices associated with the sample, sensor, and temporal modes, respectively. The vector \(\mathbf{a}^{(n)}_r\) denotes the \(r\)th column of \(\mathbf{A}^{(n)}\). To address the scale indeterminacy of the CP representation, the factor vectors are normalized such that
\[
    \|\mathbf{a}^{(n)}_r\|_2=1,
    \qquad n=1,2,3,
\]
and the corresponding component scales are absorbed into \(\lambda_r\).

Let
\[
    \boldsymbol{\Lambda}
    =
    \operatorname{diag}(\lambda_1,\ldots,\lambda_R).
\]
The mode-wise matricized forms of the CP decomposition are
\begin{align}
    \mathbf{L}_{(1)}
    &=
    \mathbf{A}^{(1)}
    \boldsymbol{\Lambda}
    \left(
    \mathbf{A}^{(3)}
    \odot
    \mathbf{A}^{(2)}
    \right)^\top,
    \label{eq:cp_mode1}
    \\
    \mathbf{L}_{(2)}
    &=
    \mathbf{A}^{(2)}
    \boldsymbol{\Lambda}
    \left(
    \mathbf{A}^{(3)}
    \odot
    \mathbf{A}^{(1)}
    \right)^\top,
    \label{eq:cp_mode2}
    \\
    \mathbf{L}_{(3)}
    &=
    \mathbf{A}^{(3)}
    \boldsymbol{\Lambda}
    \left(
    \mathbf{A}^{(2)}
    \odot
    \mathbf{A}^{(1)}
    \right)^\top.
    \label{eq:cp_mode3}
\end{align}
These matricized representations transform the estimation of each CP factor matrix into a matrix least-squares problem and form the basis of the alternating optimization algorithms developed in the following sections.

\subsection{Model I: Entrywise Sparse CP Decomposition}
\label{subsec:escp}

The first proposed model, referred to as ES-CP, is intended for faults that manifest as localized
departures at a relatively small number of sample-sensor-time locations. Examples include transient spikes, measurement glitches, short-duration shifts, and other localized deviations from the nominal functional profile. We assume that the observed tensor admits the decomposition
\begin{equation}
    \mathcal X
    =
    \mathcal L
    +
    \mathcal S
    +
    \mathcal E,
    \label{eq:escp_decomposition}
\end{equation}
where \(\mathcal L\) represents the nominal low-rank process structure, \(\mathcal S\) contains anomalous deviations, and \(\mathcal E\) represents unstructured measurement noise and model approximation error. The nominal component \(\mathcal L\) is represented by the \(R\)-component CP model in \eqref{eq:cp_full}, whereas \(\mathcal S\) is assumed to be sparse at the entry level.

For a given CP rank \(R\), the ES-CP estimator is defined by
\begin{equation}
\begin{aligned}
&
\argmin_{
    \mathcal S,\,
    \mathbf A^{(1)},\,
    \mathbf A^{(2)},\,
    \mathbf A^{(3)}
}
\quad
\left\|
    \mathcal X-\mathcal L-\mathcal S
\right\|_F^2
+
\alpha_1\|\mathcal S\|_1
+
\alpha_2
\left\|
    \mathbf A^{(3)}
\right\|_F^2
\\
&\text{s.t. }
\quad
\mathcal L
=
\llbracket
    \boldsymbol\lambda;
    \mathbf A^{(1)},
    \mathbf A^{(2)},
    \mathbf A^{(3)}
\rrbracket,
\end{aligned}
\label{eq:escp_objective}
\end{equation}

where \(\alpha_1>0\) controls entrywise sparsity and \(\alpha_2\geq0\) regularizes the temporal factor matrix. The estimated nominal component is subsequently reconstructed as \begin{equation}
    \widehat{\mathcal L}
    =
    \llbracket
        \widehat{\mathbf A}^{(1)},
        \widehat{\mathbf A}^{(2)},
        \widehat{\mathbf A}^{(3)}
    \rrbracket.
    \label{eq:escp_estimated_lowrank}
\end{equation}

The reconstruction term in \eqref{eq:escp_objective} measures the discrepancy between the observed tensor and its low-rank-plus-sparse representation. The entrywise penalty $\|\mathcal S\|_1$ encourages most entries of the anomaly tensor to be exactly zero. Consequently, a nonzero value of \(\widehat S_{ijk}\) identifies a localized departure for sample \(i\), sensor \(j\), and time point \(k\). The parameter \(\alpha_1\) determines the degree of sparsity in \(\widehat{\mathcal S}\). Larger values require stronger residual deviations to be retained as anomalies, whereas smaller values increase sensitivity to weaker departures. The parameter \(\alpha_2\) shrinks the magnitude of the temporal factor matrix and stabilizes estimation of the low-rank component. This ridge term does not directly impose temporal smoothness or continuity; rather, it limits the extent to which the temporal factors can absorb irregular variation. The CP rank \(R\) controls the complexity of the nominal component and is selected from a prespecified candidate set. Larger values permit a more flexible representation of the nominal process but may also allow the low-rank component to absorb anomalous variation. Smaller values provide a more parsimonious representation but may leave systematic nominal structure in the residual tensor.

The formulation estimates the nominal and anomalous components jointly. The low-rank component captures the dominant dependence across samples, sensors, and time, while the sparse component isolates localized departures from that structure. The objective is not jointly convex because of the multilinear CP representation, but it admits an alternating optimization scheme in which the sparse tensor and CP factor matrices are updated conditionally. The computational procedure is presented in the following subsection.

\subsection{Optimization of the ES-CP Model}
\label{subsec:escp_optimization}

The ES-CP criterion in \eqref{eq:escp_objective} is not jointly convex because the CP representation is multilinear in its factor matrices. We therefore use an alternating computational procedure. At each outer iteration, the sparse tensor from the preceding iteration is held fixed while an \(R\)-component CP approximation is fitted to the anomaly-adjusted tensor. The sparse component is then updated by applying entrywise soft-thresholding to the resulting residual. For the general CP representation, component scales may be separated from the factor matrices through the weight vector \(\boldsymbol\lambda\). In the implementation considered in this study, however, the weights are fixed at one and the factors are not column-normalized; hence, all component scales remain absorbed into the factor matrices.
For a fixed low-rank estimate
\begin{equation}
    \mathcal L
    =
    \llbracket
        \mathbf A^{(1)},
        \mathbf A^{(2)},
        \mathbf A^{(3)}
    \rrbracket,
\end{equation}
define the residual tensor
\begin{equation}
    \mathcal R
    =
    \mathcal X-\mathcal L.
    \label{eq:escp_residual}
\end{equation}
The sparse-component subproblem associated with
\eqref{eq:escp_objective} is
\begin{equation}
    \mathcal S^\star
    =
    \argmin_{\mathcal S}
    \left\{
        \|\mathcal R-\mathcal S\|_F^2
        +
        \alpha_1\|\mathcal S\|_1
    \right\}.
    \label{eq:escp_s_subproblem}
\end{equation}
Because \eqref{eq:escp_s_subproblem} is separable across tensor entries, it reduces to independent scalar optimization problems.

\begin{proposition}
\label{prop:escp_soft}
For fixed \(\mathcal L\), the unique minimizer of \eqref{eq:escp_s_subproblem} is obtained by entrywise soft-thresholding:
\begin{equation}
    S_{ijk}^{\star}
    =
    \operatorname{soft}_{\alpha_1/2}(R_{ijk})
    =
    \operatorname{sign}(R_{ijk})
    \left(
        |R_{ijk}|-\frac{\alpha_1}{2}
    \right)_+,
    \label{eq:escp_soft}
\end{equation}
where \((x)_+=\max(x,0)\).
\end{proposition}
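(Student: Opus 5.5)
The plan is to exploit the separability already noted before the statement. Writing out the norms from \eqref{eq:tensor_norms}, the objective in \eqref{eq:escp_s_subproblem} equals
\[
\sum_{i=1}^{I}\sum_{j=1}^{J}\sum_{k=1}^{K}\Bigl\{(R_{ijk}-S_{ijk})^2+\alpha_1|S_{ijk}|\Bigr\},
\]
a sum of functions, each depending on a single variable \(S_{ijk}\). A tensor \(\mathcal S\) therefore minimizes the whole objective if and only if each entry \(S_{ijk}\) minimizes its own scalar term. Uniqueness of the tensor minimizer then follows from uniqueness of every scalar minimizer. So the whole argument reduces to the scalar problem \(\min_{s\in\mathbb R} f(s)\), where \(f(s)=(r-s)^2+\alpha_1|s|\) and \(r=R_{ijk}\) is fixed.

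For the scalar problem, \(f\) is the sum of a strictly convex quadratic and a convex function. Hence \(f\) is strictly convex and coercive, so it has exactly one minimizer. I would identify that minimizer with the subgradient optimality condition
\[
0\in 2(s-r)+\alpha_1\,\partial|s|.
\]
The cases are as follows.
\begin{itemize}
\item If \(s>0\), the condition reads \(s=r-\alpha_1/2\). This is consistent with \(s>0\) exactly when \(r>\alpha_1/2\).
\item If \(s<0\), then symmetrically \(s=r+\alpha_1/2\), which requires \(r<-\alpha_1/2\).
\item If \(s=0\), the condition becomes \(2r\in[-\alpha_1,\alpha_1]\), that is, \(|r|\le\alpha_1/2\).
\end{itemize}
These three cases partition \(\mathbb R\). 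In each, the minimizer coincides with \(\operatorname{sign}(r)(|r|-\alpha_1/2)_+\), which is \eqref{eq:escp_soft}.

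An alternative that avoids subdifferentials is direct case analysis. On \(s\ge0\) and on \(s\le0\) separately, \(f\) is a smooth quadratic. One minimizes each quadratic over its half-line and compares the two values. The only care needed is the factor of \(2\) from differentiating \((r-s)^2\), which produces the threshold \(\alpha_1/2\) rather than \(\alpha_1\).

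There is no real obstacle here. The most delicate point is justifying the passage from the tensor problem to the entrywise problems, including uniqueness, and this is immediate from the additive separable structure together with strict convexity of each summand.
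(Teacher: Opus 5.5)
Your proposal is correct and follows essentially the same route as the paper's proof. Both reduce by separability to independent scalar problems $\min_{s}(r-s)^2+\alpha_1|s|$, use strict convexity for uniqueness, and resolve the subgradient condition $0\in 2(s-r)+\alpha_1\partial|s|$ in the three cases $s>0$, $s<0$, $s=0$; your explicit appeal to coercivity for existence and your tensor-level uniqueness argument are small points the paper leaves implicit.
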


The proof is provided in Appendix~\ref{app:proof_elcp_soft_threshold}. 

Residual entries whose magnitudes do not exceed \(\alpha_1/2\) are set to zero, whereas larger entries are retained after shrinkage. Thus, increasing \(\alpha_1\) produces a sparser anomaly estimate and requires a larger residual deviation for an entry to be identified as anomalous.

For fixed \(\mathcal S\), define the anomaly-adjusted tensor
\begin{equation}
    \mathcal Y
    =
    \mathcal X-\mathcal S.
    \label{eq:escp_adjusted_tensor}
\end{equation}
The implementation estimates a rank-\(R\) CP representation of \(\mathcal Y\) using alternating Tikhonov-regularized normal-equation updates. These updates provide numerically stabilized approximations to the conditional CP least-squares solutions. Under the unfolding convention used in the implementation, define
\begin{equation}
\begin{aligned}
    \mathbf Z_1
    &=
    \mathbf A^{(2)}
    \odot
    \mathbf A^{(3)},\\
    \mathbf Z_2
    &=
    \mathbf A^{(1)}
    \odot
    \mathbf A^{(3)},\\
    \mathbf Z_3
    &=
    \mathbf A^{(1)}
    \odot
    \mathbf A^{(2)}.
\end{aligned}
\label{eq:escp_khatri_rao_matrices}
\end{equation}
For each mode, let
\begin{equation}
    \mathbf B_n
    =
    \mathbf Y_{(n)}\mathbf Z_n,
    \qquad
    \mathbf V_n
    =
    \mathbf Z_n^\top\mathbf Z_n.
    \label{eq:escp_BV_definitions}
\end{equation}
Using the Khatri--Rao Gram identity,
\begin{equation}
\begin{aligned}
    \mathbf V_1
    &=
    \left(
        \mathbf A^{(2)\top}\mathbf A^{(2)}
    \right)
    \ast
    \left(
        \mathbf A^{(3)\top}\mathbf A^{(3)}
    \right),\\
    \mathbf V_2
    &=
    \left(
        \mathbf A^{(1)\top}\mathbf A^{(1)}
    \right)
    \ast
    \left(
        \mathbf A^{(3)\top}\mathbf A^{(3)}
    \right),\\
    \mathbf V_3
    &=
    \left(
        \mathbf A^{(1)\top}\mathbf A^{(1)}
    \right)
    \ast
    \left(
        \mathbf A^{(2)\top}\mathbf A^{(2)}
    \right).
\end{aligned}
\label{eq:escp_gram_matrices}
\end{equation}

For the first and second modes, the implementation applies the numerically regularized updates
\begin{equation}
    \mathbf A^{(1)}
    \leftarrow
    \mathbf B_1
    \left(
        \mathbf V_1^\top\mathbf V_1
    \right)^{\dagger}
    \mathbf V_1^\top,
    \label{eq:escp_A1_update}
\end{equation}
and
\begin{equation}
    \mathbf A^{(2)}
    \leftarrow
    \mathbf B_2
    \left(
        \mathbf V_2^\top\mathbf V_2
    \right)^{\dagger}
    \mathbf V_2^\top,
    \label{eq:escp_A2_update}
\end{equation}
where \(\epsilon>0\) is a small numerical constant and
\(\dagger\) denotes the Moore--Penrose pseudoinverse.

The temporal factor matrix is updated as
\begin{equation}
    \mathbf A^{(3)}
    \leftarrow
    \mathbf B_3
    \left(
        \mathbf V_3^\top\mathbf V_3
        +
        \alpha_2\mathbf I_R
    \right)^{\dagger}
    \mathbf V_3^\top.
    \label{eq:escp_A3_update}
\end{equation}
The parameter \(\alpha_2\) is used in the numerical update of the temporal factor matrix to provide shrinkage and improve conditioning. Because the implemented CP factors are not column-normalized, this regularization is interpreted relative to the scaling generated by the alternating updates, rather than as a scale-invariant penalty on the reconstructed low-rank tensor.

After completing the prescribed number of inner iterations, the low-rank tensor is reconstructed as
\begin{equation}
    \mathcal L
    =
    \llbracket
        \mathbf A^{(1)},
        \mathbf A^{(2)},
        \mathbf A^{(3)}
    \rrbracket.
    \label{eq:escp_lowrank_reconstruction}
\end{equation}

The complete ES-CP procedure initializes \(\mathcal S^{(0)}=\mathbf 0\) and alternates between the low-rank and sparse updates. At outer iteration \(t\), Algorithm~\ref{alg:escp_cpals} is applied to the adjusted tensor
\[
    \mathcal Y^{(t)}
    =
    \mathcal X-\mathcal S^{(t-1)}
\]
to obtain the updated low-rank component
\[
    \mathcal L^{(t)}
    =
    \llbracket
        \mathbf A^{(1,t)},
        \mathbf A^{(2,t)},
        \mathbf A^{(3,t)}
    \rrbracket.
\]
The sparse component is then updated by
\begin{equation}
    \mathcal S^{(t)}
    =
    \operatorname{soft}_{\alpha_1/2}
    \left(
        \mathcal X-\mathcal L^{(t)}
    \right).
    \label{eq:escp_outer_S_update}
\end{equation}
The procedure is repeated for a prespecified number of outer iterations. The regularized CP-ALS procedure for updating $\mathcal L$ with $\mathcal S$ fixed is summarized in Algorithm~\ref{alg:escp_cpals}.

\begin{minipage}{0.85\textwidth}
\begin{algorithm}[H]
\raggedright
\fontsize{9pt}{10pt}\selectfont
\caption{Regularized CP update for the ES-CP low-rank component}
\label{alg:escp_cpals}

\hspace*{\algorithmicindent}
\textbf{Input:}
Adjusted tensor
\(\mathcal Y=\mathcal X-\mathcal S\),
rank \(R\), regularization parameter \(\alpha_2\),
numerical constant \(\epsilon\), and maximum number of iterations
\(T_{\mathrm{in}}\).

\begin{algorithmic}[1]

\State Randomly initialize
\(\mathbf A^{(2)}\in\mathbb R^{J\times R}\) and
\(\mathbf A^{(3)}\in\mathbb R^{K\times R}\)

\For{\(q=1,\ldots,T_{\mathrm{in}}\)}

    \State
    \(\mathbf Z_1
    \gets
    \mathbf A^{(2)}
    \odot
    \mathbf A^{(3)}\)

    \State
    \(\mathbf V_1
    \gets
    \left(
        \mathbf A^{(2)\top}\mathbf A^{(2)}
    \right)
    \ast
    \left(
        \mathbf A^{(3)\top}\mathbf A^{(3)}
    \right)\)

    \State
    \(\displaystyle
    \mathbf A^{(1)}
    \gets
    \mathbf Y_{(1)}
    \mathbf Z_1
    \left(
        \mathbf V_1^\top\mathbf V_1
    \right)^{\dagger}
    \mathbf V_1^\top
    \)

    \State
    \(\mathbf Z_2
    \gets
    \mathbf A^{(1)}
    \odot
    \mathbf A^{(3)}\)

    \State
    \(\mathbf V_2
    \gets
    \left(
        \mathbf A^{(1)\top}\mathbf A^{(1)}
    \right)
    \ast
    \left(
        \mathbf A^{(3)\top}\mathbf A^{(3)}
    \right)\)

    \State
    \(\displaystyle
    \mathbf A^{(2)}
    \gets
    \mathbf Y_{(2)}
    \mathbf Z_2
    \left(
        \mathbf V_2^\top\mathbf V_2
    \right)^{\dagger}
    \mathbf V_2^\top
    \)

    \State
    \(\mathbf Z_3
    \gets
    \mathbf A^{(1)}
    \odot
    \mathbf A^{(2)}\)

    \State
    \(\mathbf V_3
    \gets
    \left(
        \mathbf A^{(1)\top}\mathbf A^{(1)}
    \right)
    \ast
    \left(
        \mathbf A^{(2)\top}\mathbf A^{(2)}
    \right)\)

    \State
    \(\displaystyle
    \mathbf A^{(3)}
    \gets
    \mathbf Y_{(3)}
    \mathbf Z_3
    \left(
        \mathbf V_3^\top\mathbf V_3
        +
        \alpha_2\mathbf I_R
    \right)^{\dagger}
    \mathbf V_3^\top
    \)

\EndFor

\State
\(\displaystyle
\mathcal L
\gets
\llbracket
    \mathbf A^{(1)},
    \mathbf A^{(2)},
    \mathbf A^{(3)}
\rrbracket
\)

\State \Return \(\mathcal L\) and
\(\{\mathbf A^{(1)},\mathbf A^{(2)},\mathbf A^{(3)}\}\)

\end{algorithmic}
\end{algorithm}
\end{minipage}

\subsection{Model II: Fiberwise Sparse-Group Lasso CP Decomposition}
\label{subsec:fgcp}

The second proposed model, referred to as \emph{Fiberwise Sparse-Group Lasso CP Decomposition} (FG-Lasso), extends ES-CP to faults that exhibit structured deviations along a limited number of sample--sensor trajectories. Such faults may persist over multiple time points, affect a substantial portion of a sensor profile, or combine localized spikes with more distributed temporal departures. In these settings, entrywise sparsity alone does not exploit the tendency of anomalous entries to be concentrated within selected temporal fibers.

FG-Lasso uses the same low-rank-plus-sparse decomposition \eqref{eq:escp_decomposition} and the same \(R\)-component CP representation of the nominal tensor introduced in \eqref{eq:cp_full}. The distinction from ES-CP lies in the structural penalty imposed on the anomaly tensor. In addition to entrywise sparsity, FG-Lasso encourages group sparsity across the temporal fibers \(\mathcal S_{ij:}\).

Define the fiberwise mixed norm
\begin{equation}
    \|\mathcal S\|_{\mathrm{fib},2,1}
    =
    \sum_{i=1}^{I}
    \sum_{j=1}^{J}
    \|\mathcal S_{ij:}\|_2.
    \label{eq:fiber_mixed_norm}
\end{equation}
The Euclidean norm measures the aggregate anomaly magnitude within each sample--sensor trajectory, while summation across fibers encourages many complete temporal fibers to be estimated as zero.

For a given CP rank \(R\), the FG-Lasso estimator is defined by
\begin{equation}
\begin{aligned}
\min_{
    \mathbf A^{(1)},
    \mathbf A^{(2)},
    \mathbf A^{(3)},
    \mathcal S
}
\quad
&
\left\|
    \mathcal X-\mathcal L-\mathcal S
\right\|_F^2
\\
&\quad+
\alpha_1
\left[
    \gamma\|\mathcal S\|_1
    +
    (1-\gamma)
    \|\mathcal S\|_{\mathrm{fib},2,1}
\right]
+
\alpha_2
\left\|
    \mathbf A^{(3)}
\right\|_F^2,
\\
&\text{s.t. }
\quad
\mathcal L
=
\llbracket
    \boldsymbol\lambda;
    \mathbf A^{(1)},
    \mathbf A^{(2)},
    \mathbf A^{(3)}
\rrbracket,
\end{aligned}
\label{eq:fgcp_objective}
\end{equation}
where
\[
    \alpha_1\geq0,
    \qquad
    \alpha_2\geq0,
    \qquad
    0\leq\gamma\leq1.
\]

As in ES-CP, no separate CP-weight vector or factor-normalization constraints are introduced in the implemented parameterization. The two sparse penalties in \eqref{eq:fgcp_objective} operate at complementary structural levels. The entrywise term $\|\mathcal S\|_1$ encourages individual anomaly entries to be zero and preserves sensitivity to localized deviations. The fiberwise term $\|\mathcal S\|_{\mathrm{fib},2,1}$ performs group selection at the sample--sensor level. A temporal fiber may be set entirely to zero when its aggregate residual magnitude is insufficient to overcome the group penalty.

The parameter \(\alpha_1\) controls the overall strength of regularization applied to the anomaly tensor, while \(\alpha_2\) controls the ridge regularization applied to the temporal CP factor matrix \(\mathbf A^{(3)}\). The parameter \(\gamma\) determines how the sparse regularization is allocated between the entrywise and fiberwise penalties. Increasing \(\gamma\) places greater emphasis on entrywise sparsity, whereas decreasing \(\gamma\) places greater emphasis on selecting or eliminating complete sample--sensor temporal fibers.

The estimated anomaly tensor provides two levels of diagnostic information. A nonzero fiber \(\widehat{\mathcal S}_{ij:}\) identifies a sample--sensor trajectory associated with an anomalous departure, while its nonzero entries localize the time points at which the departure occurs. The fiberwise penalty does not directly impose temporal smoothness or contiguity within an active fiber. Rather, it encourages anomalous entries to be concentrated within a
limited number of sample--sensor trajectories. The optimization procedure and the closed-form fiberwise sparse-group update for \eqref{eq:fgcp_objective} are presented in the following subsection.

\subsection{Optimization of the FG-Lasso Model}
\label{subsec:fgcp_optimization}

The FG-Lasso criterion in \eqref{eq:fgcp_objective} is not jointly convex because the CP representation is multilinear in its factor matrices. We therefore use an alternating procedure. At each outer iteration, the sparse tensor from the preceding iteration is held fixed while a regularized rank-\(R\) CP approximation is fitted to the anomaly-adjusted tensor. The sparse component is then updated by applying a fiberwise sparse-group proximal operator to the resulting residual tensor.

The implemented parameterization does not normalize the CP factor matrices and does not introduce a separate vector of CP component weights.

For a fixed low-rank component, define
\begin{equation}
    \mathcal R
    =
    \mathcal X-\mathcal L.
    \label{eq:fgcp_residual}
\end{equation}

The sparse-tensor subproblem associated with
\eqref{eq:fgcp_objective} is
\begin{equation}
\begin{aligned}
    \mathcal S^\star
    =
    \argmin_{\mathcal S}
    \quad
    &
    \|\mathcal R-\mathcal S\|_F^2
    +
    \alpha_1\gamma\|\mathcal S\|_1
    \\
    &+
    \alpha_1(1-\gamma)
    \sum_{i=1}^{I}
    \sum_{j=1}^{J}
    \|\mathcal S_{ij:}\|_2.
\end{aligned}
\label{eq:fgcp_s_subproblem}
\end{equation}

Because the reconstruction loss and both penalties are separable across sample--sensor fibers, this problem can be written as
\begin{equation}
    \sum_{i=1}^{I}
    \sum_{j=1}^{J}
    \left[
        \|\mathcal R_{ij:}-\mathcal S_{ij:}\|_2^2
        +
        \alpha_1\gamma\|\mathcal S_{ij:}\|_1
        +
        \alpha_1(1-\gamma)\|\mathcal S_{ij:}\|_2
    \right].
    \label{eq:fgcp_fiber_separable}
\end{equation}

For a fixed fiber, let
\[
    \mathbf z=\mathcal R_{ij:},
    \qquad
    \mathbf s=\mathcal S_{ij:}.
\]
The corresponding optimization problem is
\begin{equation}
    \mathbf s^\star
    =
    \argmin_{\mathbf s\in\mathbb R^K}
    \left\{
        \|\mathbf z-\mathbf s\|_2^2
        +
        \alpha_1\gamma\|\mathbf s\|_1
        +
        \alpha_1(1-\gamma)\|\mathbf s\|_2
    \right\}.
    \label{eq:fgcp_fiber_subproblem}
\end{equation}

Define the entrywise and fiberwise thresholds as

\begin{equation}
    \tau_{\mathrm e}
    =
    \frac{\alpha_1\gamma}{2},
    \qquad
    \tau_{\mathrm g}
    =
    \frac{\alpha_1(1-\gamma)}{2}.
    \label{eq:fgcp_thresholds}
\end{equation}

For a vector \(\mathbf z\), define entrywise soft-thresholding by
\begin{equation}
    \left[
        \operatorname{soft}_{\tau_{\mathrm e}}(\mathbf z)
    \right]_k
    =
    \operatorname{sign}(z_k)
    \left(
        |z_k|-\tau_{\mathrm e}
    \right)_+,
    \qquad
    k=1,\ldots,K.
    \label{eq:fgcp_soft_operator}
\end{equation}

\begin{proposition}
\label{prop:fgcp_prox}
For fixed \(\mathcal L\), the sparse subproblem \eqref{eq:fgcp_s_subproblem} has a unique minimizer. For each temporal fiber, define
\begin{equation}
    \mathbf u_{ij}
    =
    \operatorname{soft}_{\tau_{\mathrm e}}
    \left(
        \mathcal R_{ij:}
    \right).
    \label{eq:fgcp_soft_fiber}
\end{equation}
Then
\begin{equation}
    \mathcal S_{ij:}^{\star}
    =
    \left(
        1-
        \frac{\tau_{\mathrm g}}
        {\|\mathbf u_{ij}\|_2}
    \right)_+
    \mathbf u_{ij},
    \label{eq:fgcp_compact_prox}
\end{equation}
with the convention that the right-hand side equals \(\mathbf 0\) when \(\mathbf u_{ij}=\mathbf 0\). Equivalently,
\begin{equation}
    \mathcal S_{ij:}^{\star}
    =
    \begin{cases}
        \mathbf 0,
        &
        \|\mathbf u_{ij}\|_2
        \leq
        \tau_{\mathrm g},
        \\[6pt]
        \displaystyle
        \left(
            1-
            \frac{\tau_{\mathrm g}}
            {\|\mathbf u_{ij}\|_2}
        \right)
        \mathbf u_{ij},
        &
        \|\mathbf u_{ij}\|_2
        >
        \tau_{\mathrm g}.
    \end{cases}
    \label{eq:fgcp_piecewise_prox}
\end{equation}
\end{proposition}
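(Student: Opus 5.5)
The plan has three steps: reduce the problem to one temporal fiber, establish uniqueness by strict convexity, and verify the closed form through the subgradient optimality condition. By the separability displayed in \eqref{eq:fgcp_fiber_separable}, the objective in \eqref{eq:fgcp_s_subproblem} is a sum of terms, each depending on a single fiber $\mathcal S_{ij:}$. It therefore suffices to solve \eqref{eq:fgcp_fiber_subproblem} for a fixed $\mathbf z=\mathcal R_{ij:}$. Dividing by $2$, this is equivalent to minimizing
\[
F(\mathbf s)=\tfrac12\|\mathbf z-\mathbf s\|_2^2+\tau_{\mathrm e}\|\mathbf s\|_1+\tau_{\mathrm g}\|\mathbf s\|_2 .
\]
The function $F$ is strongly convex: a strongly convex quadratic plus two convex norms. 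It is also continuous and coercive, so it has exactly one minimizer. This gives existence and uniqueness of the tensor-level minimizer, since the fibers are decoupled.

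To identify the minimizer, I use the optimality condition $\mathbf 0\in\partial F(\mathbf s^\star)$, that is,
\[
\mathbf z-\mathbf s^\star\in\tau_{\mathrm e}\,\partial\|\mathbf s^\star\|_1+\tau_{\mathrm g}\,\partial\|\mathbf s^\star\|_2 .
\]
The subdifferential sum rule holds here because both norms are finite everywhere. Because $F$ is convex, it is enough to exhibit a candidate and subgradients satisfying this inclusion. Write $\mathbf u=\operatorname{soft}_{\tau_{\mathrm e}}(\mathbf z)$. The key fact I will use is that $\mathbf z-\mathbf u=\tau_{\mathrm e}\mathbf v$ for some $\mathbf v\in\partial\|\mathbf u\|_1$. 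Concretely, $v_k=\operatorname{sign}(z_k)$ when $|z_k|>\tau_{\mathrm e}$ and $v_k=z_k/\tau_{\mathrm e}\in[-1,1]$ otherwise; this is the scalar computation behind Proposition~\ref{prop:escp_soft}.

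\emph{Case $\|\mathbf u\|_2\le\tau_{\mathrm g}$.} Take $\mathbf s^\star=\mathbf 0$. With $\mathbf v$ as above, $\mathbf v\in[-1,1]^K=\partial\|\mathbf 0\|_1$. Set $\mathbf w=\mathbf u/\tau_{\mathrm g}$, which satisfies $\|\mathbf w\|_2\le1$ and hence $\mathbf w\in\partial\|\mathbf 0\|_2$. Then $\tau_{\mathrm e}\mathbf v+\tau_{\mathrm g}\mathbf w=(\mathbf z-\mathbf u)+\mathbf u=\mathbf z$, so the inclusion holds. If $\tau_{\mathrm g}=0$, this case forces $\mathbf u=\mathbf 0$ and the argument is trivial.

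\emph{Case $\|\mathbf u\|_2>\tau_{\mathrm g}$.} Take $\mathbf s^\star=c\,\mathbf u$ with $c=1-\tau_{\mathrm g}/\|\mathbf u\|_2\in(0,1]$. Since $c>0$, the vectors $\mathbf s^\star$ and $\mathbf u$ have the same sign pattern and support, so the same $\mathbf v$ lies in $\partial\|\mathbf s^\star\|_1$. Since $\mathbf s^\star\neq\mathbf 0$, the subgradient of the Euclidean norm is $\mathbf w=\mathbf s^\star/\|\mathbf s^\star\|_2=\mathbf u/\|\mathbf u\|_2$. Then
\[
\mathbf z-\mathbf s^\star=(\mathbf z-\mathbf u)+(1-c)\mathbf u=\tau_{\mathrm e}\mathbf v+\tau_{\mathrm g}\,\mathbf u/\|\mathbf u\|_2 ,
\]
which is exactly the required inclusion.

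Combining the two cases gives \eqref{eq:fgcp_piecewise_prox}, and this is the same as the compact form \eqref{eq:fgcp_compact_prox} under the stated convention for $\mathbf u=\mathbf 0$. The main obstacle is the second case. One must check that the single vector $\mathbf v$ built from the soft-thresholding step remains a valid $\ell_1$-subgradient after the radial shrinkage. This holds because the shrinkage factor $c$ is strictly positive and so preserves signs and support. Once that is verified, the rest is bookkeeping, and uniqueness from strong convexity guarantees that the candidate found is the minimizer.
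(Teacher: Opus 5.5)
Your proposal is correct and follows the paper's proof essentially step for step. Both use fiber separability with rescaling by $1/2$, uniqueness from strong convexity, the decomposition $\mathbf z=\mathbf u+\tau_{\mathrm e}\mathbf v$ with $\mathbf v\in\partial\|\mathbf u\|_1$, and the same two-case subgradient verification, including the choice $\mathbf w=\mathbf u/\tau_{\mathrm g}$ and the separate treatment of $\tau_{\mathrm g}=0$.

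One small blemish: your explicit formula $v_k=z_k/\tau_{\mathrm e}$ is undefined when $\tau_{\mathrm e}=0$ (i.e.\ $\gamma=0$ or $\alpha_1=0$). In that case $z_k=0$ and any $v_k\in[-1,1]$ works, so the fix is immediate.
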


The proof is provided in Appendix~\ref{app:fgcp_prox_proof}. The update operates in two stages. First, entrywise soft-thresholding removes small deviations within the fiber. Second, group shrinkage either eliminates the complete soft-thresholded fiber or proportionally shrinks it. When \(\gamma=1\), the fiberwise penalty vanishes and \(\tau_{\mathrm g}=0\). The update therefore reduces to entrywise soft-thresholding with threshold \(\alpha_1/2\). When \(\gamma=0\), the entrywise penalty vanishes and \(\tau_{\mathrm e}=0\), yielding the pure fiberwise group-thresholding update
\begin{equation}
    \mathcal S_{ij:}^{\star}
    =
    \left(
        1-
        \frac{\alpha_1}
        {2\|\mathcal R_{ij:}\|_2}
    \right)_+
    \mathcal R_{ij:}.
    \label{eq:fgcp_group_only}
\end{equation}

The fiberwise update is summarized in Algorithm~\ref{alg:fgcp_update_s}.

\begin{algorithm}[H]
\caption{Fiberwise sparse-group update for the FG-Lasso anomaly tensor}
\label{alg:fgcp_update_s}
\begin{algorithmic}[1]
\Require Residual tensor
\(\mathcal R=\mathcal X-\mathcal L\),
anomaly regularization parameter \(\alpha_1\), and mixing parameter
\(\gamma\in[0,1]\)

\State
\(\displaystyle
\tau_{\mathrm e}
\gets
\frac{\alpha_1\gamma}{2}
\)

\State
\(\displaystyle
\tau_{\mathrm g}
\gets
\frac{\alpha_1(1-\gamma)}{2}
\)

\For{\(i=1,\ldots,I\)}
    \For{\(j=1,\ldots,J\)}
        \State
        \(\mathbf z\gets\mathcal R_{ij:}\)

        \State
        \(\mathbf u
        \gets
        \operatorname{soft}_{\tau_{\mathrm e}}(\mathbf z)\)

        \If{\(\|\mathbf u\|_2\leq\tau_{\mathrm g}\)}
            \State
            \(\mathcal S_{ij:}\gets\mathbf 0\)
        \Else
            \State
            \(\displaystyle
            \mathcal S_{ij:}
            \gets
            \left(
                1-
                \frac{\tau_{\mathrm g}}{\|\mathbf u\|_2}
            \right)
            \mathbf u
            \)
        \EndIf
    \EndFor
\EndFor

\State \Return \(\mathcal S\)
\end{algorithmic}
\end{algorithm}

For fixed \(\mathcal S\), define the anomaly-adjusted tensor
\begin{equation}
    \mathcal Y
    =
    \mathcal X-\mathcal S.
    \label{eq:fgcp_adjusted_tensor}
\end{equation}
The low-rank component is updated by fitting a regularized rank-\(R\) CP decomposition to \(\mathcal Y\). The parameter \(\alpha_2\) controls the ridge regularization applied to the temporal factor matrix \(\mathbf A^{(3)}\), whereas \(\alpha_1\) and \(\gamma\) affect only the
structured sparse update. The parameters \(\alpha_1\), \(\alpha_2\), and \(\gamma\) are selected from prespecified candidate sets. We use the same regularized CP-ALS framework described in Section~\ref{subsec:escp_optimization} and summarized in Algorithm~\ref{alg:escp_cpals}. Specifically, ridge regularization is applied to the temporal factor matrix, while the sample- and sensor-mode factors are updated through alternating least-squares steps. After the prescribed number of inner iterations, the low-rank component is reconstructed as $\mathcal L= \llbracket \mathbf A^{(1)}, \mathbf A^{(2)}, \mathbf A^{(3)} \rrbracket$. Thus, the distinction between ES-CP and FG-Lasso lies in the structural penalty and corresponding update for the anomaly tensor, rather than in the representation of the nominal low-rank component.

The complete FG-Lasso procedure initializes \(\mathcal S^{(0)}=\mathbf 0\) and alternates between the low-rank and structured sparse updates. At outer iteration \(t\), Algorithm~\ref{alg:escp_cpals} is applied to
\[
    \mathcal Y^{(t)}
    =
    \mathcal X-\mathcal S^{(t-1)}
\]
to obtain \(\mathcal L^{(t)}\). The residual
\[
    \mathcal R^{(t)}
    =
    \mathcal X-\mathcal L^{(t)}
\]
is then passed to Algorithm~\ref{alg:fgcp_update_s} to obtain \(\mathcal S^{(t)}\). These updates are repeated for a prespecified number of outer iterations. The two updates are repeated for a prescribed number of outer iterations. The implementation uses fixed inner and outer iteration counts rather than an objective-based convergence criterion. Because the CP model is nonconvex and the factor matrices are randomly initialized at each outer iteration,
the resulting estimates may depend on initialization. Therefore, monotonic decrease of \eqref{eq:fgcp_objective} and convergence to a global minimizer are not asserted.

\subsection{Sample-Level Anomaly Detection and Localization}
\label{subsec:anomaly_decision}

The proposed decomposition methods estimate an anomaly tensor \(\widehat{\mathcal S}\in\mathbb R^{I\times J\times K}\), where \(\widehat S_{ijk}\) represents the estimated anomalous deviation associated with sample \(i\), functional channel \(j\), and time point \(k\). To convert this tensor-valued estimate into a sample-level anomaly decision, we define the anomaly score for sample \(i\) as
\begin{equation}
    d_i
    =
    \left\|
        \widehat{\mathcal S}_{i::}
    \right\|_{\infty}
    =
    \max_{1\leq j\leq J}
    \max_{1\leq k\leq K}
    \left|
        \widehat S_{ijk}
    \right|,
    \qquad
    i=1,\ldots,I.
    \label{eq:sample_anomaly_score}
\end{equation}
Thus, \(d_i\) measures the largest estimated anomalous deviation within the multichannel functional observation corresponding to sample \(i\).

The sample-level anomaly indicator is defined as
\begin{equation}
    \widehat y_i
    =
    \mathbb I
    \left(
        d_i>\varepsilon_s
    \right),
    \label{eq:sample_anomaly_decision}
\end{equation}
where \(\mathbb I(\cdot)\) is the indicator function and \(\varepsilon_s\geq0\) is a support threshold. In the numerical studies, \(\varepsilon_s=0\) is used because the soft-thresholding and fiberwise proximal updates produce exact zeros. Consequently,
\begin{equation}
    \widehat y_i
    =
    \begin{cases}
        1,
        &
        \widehat S_{ijk}\neq0
        \text{ for at least one }(j,k),
        \\[4pt]
        0,
        &
        \widehat S_{ijk}=0
        \text{ for all }(j,k).
    \end{cases}
    \label{eq:sample_anomaly_decision_equivalent}
\end{equation}
A sample is therefore classified as faulty whenever its estimated sparse component contains at least one nonzero entry; otherwise, it is classified as normal.

The same sample-level decision rule is applied to ES-CP, FG-Lasso, and TRPCA so that the tensor-decomposition methods are compared using a common classification criterion. For the PCA-based benchmarks, the binary anomaly labels are obtained directly from Isolation Forest, Local Outlier Factor, and One-Class SVM.

In addition to sample-level detection, the support of \(\widehat{\mathcal S}\) provides diagnostic information. For a sample classified as faulty, the channel-level indicator
\begin{equation}
    \widehat y_{ij}
    =
    \mathbb I
    \left(
        \left\|
            \widehat{\mathcal S}_{ij:}
        \right\|_{\infty}
        >
        \varepsilon_s
    \right)
    \label{eq:channel_anomaly_indicator}
\end{equation}
identifies the functional channels containing estimated anomalous deviations, while the nonzero entries of \(\widehat{\mathcal S}_{ij:}\) identify their corresponding time locations. This hierarchical interpretation permits anomaly detection at the sample level and localization at the channel and time-point levels.

\section{Simulation}
\label{sec:Simulation}

We conduct two simulation studies to evaluate the anomaly-detection performance of the proposed ES-CP and FG-Lasso methods under different fault structures and data-quality conditions. The proposed methods are compared with Tensor Robust Principal Component Analysis (TRPCA), which is the most directly related benchmark because it also decomposes an observed tensor into low-rank and sparse components. The sparse component estimated by TRPCA can therefore be used for anomaly detection in a manner comparable to the proposed methods.

To provide a broader comparison with conventional dimension reduction based approaches, we also consider three PCA-based anomaly-detection methods: PCA combined with Isolation Forest (PCA+IF), Local Outlier Factor (PCA+LOF), and One-Class SVM (PCA+OCSVM). For each method, PCA is first used to project the functional observations onto the leading principal-component scores that explain at least \(95\%\) of the total variation. The resulting score vectors are then supplied to the corresponding unsupervised anomaly detector. Isolation Forest and LOF require a prespecified contamination proportion, while One-Class SVM requires selection of the parameter \(\nu\), which controls the allowable fraction of anomalous observations. To provide these benchmark methods with favorable operating conditions, these parameters are set using the known contamination proportion in each simulated dataset and the known fault proportion in the case study. In practice, however, the true anomaly proportion is generally unknown; therefore, the reported performance of these methods represents an oracle-assisted comparison.

The first simulation follows the multivariate functional-data generation framework considered by \cite{capezza2024robust}. Each observation contains multiple correlated functional channels generated over a common temporal domain. Controlled anomalies are introduced through casewise and componentwise contamination, with faults affecting either a single channel or all channels. The contamination magnitude is varied across several severity levels. This design provides a controlled setting for evaluating how the competing methods respond to localized, channel-specific, and broadly distributed anomalous deviations. 

The second simulation is inspired by the functional data generation setting of \cite{gahrooei2021multiple}. Each observation consists of multiple correlated functional channels generated over a common discretization domain. Complete sample--channel trajectories are masked for selected observations, and additive symmetric perturbations are introduced to represent data corruption. Rank-one persistent mean-shift anomalies of increasing severity are then added to a subset of the samples. This design evaluates the ability of the competing methods to distinguish nominal and anomalous variation under cross-channel dependence, trajectory-level masking, additive perturbations, and a limited sample size.

\subsection{Simulation I: Synthetic Multichannel Functional Process Simulation}
\label{subsec:simulation1}

This simulation study is to evaluate the performance of our method under controlled multichannel functional data. The data generation procedure follows established techniques from \cite{capezza2024robust, chiou2014multivariate, centofanti2021functional}. We define the functional domain as \( t \in [0, 1] \), and generate \( p = 10 \) functional signals for each sample. The generated multivariate functional data \( X(t) \) includes both normal and anomalous samples. Each sample comprises 10 channels, each representing one component of \( X(t) \), and the normal profiles are constructed using a specified mean function and a carefully designed covariance structure to reflect realistic behavior of functional signals.

Anomalies are introduced under either componentwise or casewise contamination. In the componentwise setting, contamination is generated at the sample--channel level, allowing different functional channels within the same sample to be contaminated independently. In the casewise setting, contamination is generated at the sample level, so that the channels specified by the scenario are contaminated jointly for a selected sample. For each contamination setting, we consider two channel patterns: the anomalous deviation is applied either to one selected functional channel or to all ten channels. The magnitude of the anomalous deviation is varied across four severity levels. Each simulation includes 1000 normal samples, with an additional 5\% of the samples contaminated by either type of anomaly. Figure~\ref{fig:Simulation data} illustrates an example of such contamination, where all signals exhibit a casewise anomaly and signal 5 is additionally affected by a componentwise outlier.


\begin{figure*}[ht]
\centering\includegraphics[width=1\linewidth]{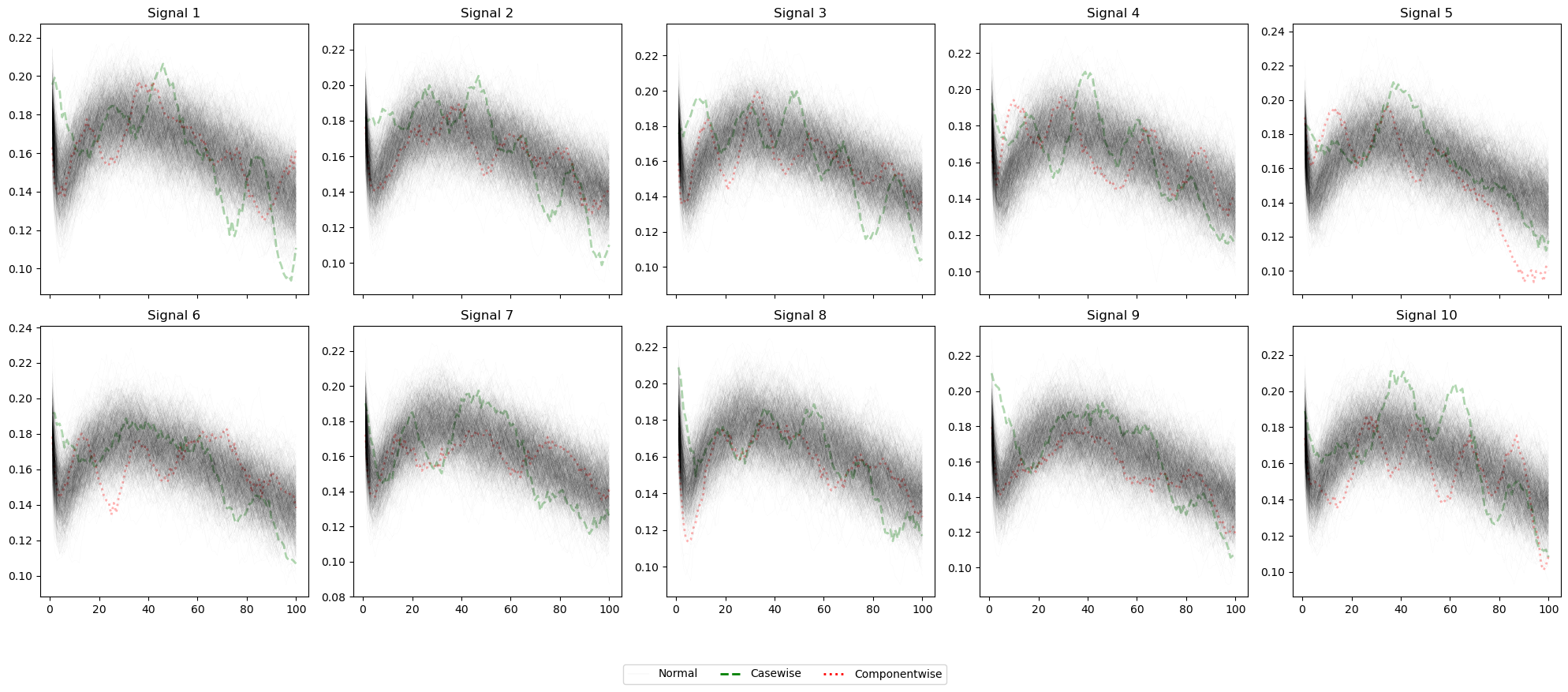}
\caption{Example of simulated multichannel functional data containing both casewise and componentwise contamination. All ten channels of the selected case contain casewise contamination, while Channel~5 additionally contains componentwise contamination.}
\label{fig:Simulation data}
\end{figure*}
 

To assess robustness, we define four levels of contamination severity following the same parameter settings as in \cite{capezza2024robust}, ranging from mild to extremely severe. Mild contamination represents subtle deviations that may be difficult to detect, while moderate contamination includes clearer, more visible deviations. Severe contamination results in large mean shifts that significantly depart from normal behavior, and extreme contamination introduces pronounced, high-magnitude shifts across the signals. This range allows us to comprehensively evaluate the sensitivity and stability of the proposed method under increasing anomaly intensity. Details of this simulation is given in  Appendix~\ref{app:SimulationIDetails}.


Tables~\ref{tab:ocp_all}--\ref{tab:oce_one} summarize the simulation results for the four combinations of contamination mechanism and fault pattern:
componentwise or casewise contamination affecting either a single functional channel or all ten channels. For each scenario, we report macro F$_1$, macro precision, and macro recall for FG-Lasso, ES-CP, PCA with Isolation Forest (PCA+IF), PCA with Local Outlier Factor (PCA+LOF), PCA with One-Class SVM (PCA+OCSVM), and TRPCA. Boldface denotes the highest macro F$_1$ score at
each severity level, with ties highlighted jointly.

Across the 16 scenario--severity combinations, FG-Lasso achieves or ties for
the highest macro F$_1$ score in 15 cases. Averaged across all settings, its
macro F$_1$ is 0.799, compared with 0.741 for ES-CP, 0.706 for TRPCA, and
0.554--0.587 for the three PCA-based anomaly detectors. The only setting in
which FG-Lasso does not attain the highest macro F$_1$ is the mildest
all-channel casewise case, where PCA+LOF obtains 0.559 compared with 0.543 for
both FG-Lasso and ES-CP. From severity level 2 onward, however, FG-Lasso
becomes the best-performing method in the casewise all-channel setting and its
advantage increases rapidly with contamination magnitude.

The benefit of the fiberwise penalty is most evident when the fault affects all channels. Under componentwise contamination, the macro F$_1$ of FG-Lasso increases from 0.617 at severity level 1 to 0.933 at severity level 4 (Table~\ref{tab:ocp_all}). At the highest severity, this exceeds ES-CP and TRPCA by 0.133 and 0.107, respectively. A similar pattern appears for the all-channel casewise condition (Table~\ref{tab:oce_all}), where FG-Lasso increases
from 0.543 to 0.956, compared with final macro F$_1$ scores of 0.792 for ES-CP and 0.824 for TRPCA. The corresponding precision and recall values of 0.953 and 0.958 show that the improvement is not driven by only one aspect of classification performance.

When contamination is restricted to one channel, FG-Lasso and ES-CP perform similarly. For the componentwise one-channel scenario
(Table~\ref{tab:ocp_one}), they tie at severity levels 2 and 4, with macro F$_1$ scores of 0.908 and 0.992, respectively. At the other two severity levels, FG-Lasso has only modest advantages of 0.017 and 0.006. The same behavior is observed for one-channel casewise contamination (Table~\ref{tab:oce_one}): FG-Lasso and ES-CP increase from 0.538 and 0.532 at severity level 1 to an identical value of 0.969 at severity level 4. Thus, when the anomaly is confined to a single trajectory, entrywise sparsity is generally sufficient, and the additional fiberwise penalty provides only a small improvement.

TRPCA is the strongest benchmark overall and improves substantially as the fault magnitude increases. It is particularly competitive in the all-channel scenarios, where it sometimes outperforms ES-CP. For example, at severity level 4, TRPCA obtains macro F$_1$ scores of 0.826 and 0.824 under componentwise and casewise all-channel contamination, compared with 0.800 and 0.792 for ES-CP. In contrast, ES-CP consistently exceeds TRPCA in the one-channel settings, reaching 0.992 and 0.969 under the highest-severity componentwise and casewise conditions. These results indicate that the entrywise CP formulation is particularly effective for localized channel-specific faults, whereas broader faults may be more readily captured by TRPCA unless fiber structure is explicitly incorporated.

The PCA-based detectors provide substantially weaker and less consistent performance. Their macro F$_1$ scores remain near 0.5--0.6 in many settings and often change little as severity increases. For example, in the componentwise all-channel scenario, PCA+IF decreases from 0.554 to 0.528 and PCA+LOF decreases from 0.601 to 0.501 as severity increases from 1 to 4. PCA+OCSVM is generally the strongest of the PCA-based methods in the all-channel settings, but its highest macro F$_1$ remains 0.670. This behavior suggests that reducing the functional tensor to conventional PCA scores can fail to preserve the sample--channel--time structure needed to distinguish systematic faults from nominal functional variation.

\begin{table*}[!htbp]
\centering
\setlength{\tabcolsep}{2.5pt}
\resizebox{\textwidth}{!}{%
\begin{tabular}{|c|ccc|ccc|ccc|ccc|ccc|ccc|}
\hline
\multirow{2}{*}{Severity}
& \multicolumn{3}{c|}{FG-Lasso}
& \multicolumn{3}{c|}{ES-CP}
& \multicolumn{3}{c|}{PCA+IF}
& \multicolumn{3}{c|}{PCA+LOF}
& \multicolumn{3}{c|}{PCA+OCSVM}
& \multicolumn{3}{c|}{TRPCA} \\ \cline{2-19}
& Macro F$_1$ & Macro Precision & Macro Recall
& Macro F$_1$ & Macro Precision & Macro Recall
& Macro F$_1$ & Macro Precision & Macro Recall
& Macro F$_1$ & Macro Precision & Macro Recall
& Macro F$_1$ & Macro Precision & Macro Recall
& Macro F$_1$ & Macro Precision & Macro Recall \\ \hline

1
& \textbf{0.617} & 0.612 & 0.624
& 0.561 & 0.588 & 0.549
& 0.554 & 0.554 & 0.554
& 0.601 & 0.601 & 0.601
& 0.584 & 0.580 & 0.588
& 0.556 & 0.582 & 0.544 \\ \hline

2
& \textbf{0.716} & 0.789 & 0.674
& 0.581 & 0.784 & 0.553
& 0.554 & 0.554 & 0.554
& 0.606 & 0.606 & 0.606
& 0.605 & 0.601 & 0.609
& 0.644 & 0.675 & 0.624 \\ \hline

3
& \textbf{0.857} & 0.883 & 0.835
& 0.715 & 0.891 & 0.653
& 0.533 & 0.533 & 0.533
& 0.548 & 0.548 & 0.548
& 0.613 & 0.610 & 0.615
& 0.759 & 0.764 & 0.754 \\ \hline

4
& \textbf{0.933} & 0.940 & 0.927
& 0.800 & 0.947 & 0.729
& 0.528 & 0.527 & 0.527
& 0.501 & 0.501 & 0.501
& 0.652 & 0.645 & 0.661
& 0.826 & 0.806 & 0.849 \\ \hline

\end{tabular}}
\caption{\footnotesize
Macro F$_1$, macro precision, and macro recall under componentwise contamination affecting all ten functional channels.}
\label{tab:ocp_all}
\end{table*}

\begin{table*}[!htbp]
\centering
\setlength{\tabcolsep}{2.5pt}
\resizebox{\textwidth}{!}{%
\begin{tabular}{|c|ccc|ccc|ccc|ccc|ccc|ccc|}
\hline
\multirow{2}{*}{Severity}
& \multicolumn{3}{c|}{FG-Lasso}
& \multicolumn{3}{c|}{ES-CP}
& \multicolumn{3}{c|}{PCA+IF}
& \multicolumn{3}{c|}{PCA+LOF}
& \multicolumn{3}{c|}{PCA+OCSVM}
& \multicolumn{3}{c|}{TRPCA} \\ \cline{2-19}
& Macro F$_1$ & Macro Precision & Macro Recall
& Macro F$_1$ & Macro Precision & Macro Recall
& Macro F$_1$ & Macro Precision & Macro Recall
& Macro F$_1$ & Macro Precision & Macro Recall
& Macro F$_1$ & Macro Precision & Macro Recall
& Macro F$_1$ & Macro Precision & Macro Recall \\ \hline

1
& 0.543 & 0.537 & 0.561
& 0.543 & 0.537 & 0.561
& 0.522 & 0.522 & 0.522
& \textbf{0.559} & 0.559 & 0.559
& 0.547 & 0.544 & 0.550
& 0.504 & 0.510 & 0.504 \\ \hline

2
& \textbf{0.665} & 0.627 & 0.782
& 0.542 & 0.612 & 0.530
& 0.522 & 0.522 & 0.522
& 0.585 & 0.585 & 0.585
& 0.626 & 0.624 & 0.627
& 0.573 & 0.602 & 0.559 \\ \hline

3
& \textbf{0.826} & 0.799 & 0.858
& 0.688 & 0.798 & 0.641
& 0.564 & 0.564 & 0.564
& 0.491 & 0.491 & 0.491
& 0.623 & 0.618 & 0.630
& 0.658 & 0.685 & 0.639 \\ \hline

4
& \textbf{0.956} & 0.953 & 0.958
& 0.792 & 0.954 & 0.719
& 0.559 & 0.559 & 0.559
& 0.480 & 0.480 & 0.480
& 0.670 & 0.657 & 0.685
& 0.824 & 0.803 & 0.849 \\ \hline

\end{tabular}}
\caption{\footnotesize
Macro F$_1$, macro precision, and macro recall under casewise contamination affecting all ten functional channels.}
\label{tab:oce_all}
\end{table*}

\begin{table*}[!htbp]
\centering
\setlength{\tabcolsep}{2.5pt}
\resizebox{\textwidth}{!}{%
\begin{tabular}{|c|ccc|ccc|ccc|ccc|ccc|ccc|}
\hline
\multirow{2}{*}{Severity}
& \multicolumn{3}{c|}{FG-Lasso}
& \multicolumn{3}{c|}{ES-CP}
& \multicolumn{3}{c|}{PCA+IF}
& \multicolumn{3}{c|}{PCA+LOF}
& \multicolumn{3}{c|}{PCA+OCSVM}
& \multicolumn{3}{c|}{TRPCA} \\ \cline{2-19}
& Macro F$_1$ & Macro Precision & Macro Recall
& Macro F$_1$ & Macro Precision & Macro Recall
& Macro F$_1$ & Macro Precision & Macro Recall
& Macro F$_1$ & Macro Precision & Macro Recall
& Macro F$_1$ & Macro Precision & Macro Recall
& Macro F$_1$ & Macro Precision & Macro Recall \\ \hline

1
& \textbf{0.782} & 0.856 & 0.736
& 0.765 & 0.863 & 0.711
& 0.554 & 0.554 & 0.554
& 0.559 & 0.559 & 0.559
& 0.527 & 0.526 & 0.527
& 0.676 & 0.706 & 0.655 \\ \hline

2
& \textbf{0.908} & 0.910 & 0.906
& \textbf{0.908} & 0.910 & 0.906
& 0.580 & 0.580 & 0.580
& 0.617 & 0.617 & 0.617
& 0.567 & 0.564 & 0.571
& 0.816 & 0.804 & 0.830 \\ \hline

3
& \textbf{0.969} & 0.964 & 0.973
& 0.963 & 0.963 & 0.963
& 0.570 & 0.569 & 0.569
& 0.648 & 0.648 & 0.648
& 0.573 & 0.569 & 0.577
& 0.891 & 0.845 & 0.955 \\ \hline

4
& \textbf{0.992} & 0.985 & 0.999
& \textbf{0.992} & 0.985 & 0.999
& 0.575 & 0.575 & 0.575
& 0.638 & 0.638 & 0.638
& 0.584 & 0.580 & 0.588
& 0.905 & 0.851 & 0.984 \\ \hline

\end{tabular}}
\caption{\footnotesize
Macro F$_1$, macro precision, and macro recall under componentwise contamination
affecting one functional channel.}
\label{tab:ocp_one}
\end{table*}

\begin{table*}[!htbp]
\centering
\setlength{\tabcolsep}{2.5pt}
\resizebox{\textwidth}{!}{%
\begin{tabular}{|c|ccc|ccc|ccc|ccc|ccc|ccc|}
\hline
\multirow{2}{*}{Severity}
& \multicolumn{3}{c|}{FG-Lasso}
& \multicolumn{3}{c|}{ES-CP}
& \multicolumn{3}{c|}{PCA+IF}
& \multicolumn{3}{c|}{PCA+LOF}
& \multicolumn{3}{c|}{PCA+OCSVM}
& \multicolumn{3}{c|}{TRPCA} \\ \cline{2-19}
& Macro F$_1$ & Macro Precision & Macro Recall
& Macro F$_1$ & Macro Precision & Macro Recall
& Macro F$_1$ & Macro Precision & Macro Recall
& Macro F$_1$ & Macro Precision & Macro Recall
& Macro F$_1$ & Macro Precision & Macro Recall
& Macro F$_1$ & Macro Precision & Macro Recall \\ \hline

1
& \textbf{0.538} & 0.532 & 0.563
& 0.532 & 0.529 & 0.539
& 0.522 & 0.522 & 0.522
& 0.517 & 0.517 & 0.517
& 0.506 & 0.506 & 0.506
& 0.490 & 0.488 & 0.495 \\ \hline

2
& \textbf{0.646} & 0.645 & 0.648
& 0.635 & 0.660 & 0.618
& 0.564 & 0.564 & 0.564
& 0.580 & 0.580 & 0.580
& 0.547 & 0.546 & 0.547
& 0.537 & 0.559 & 0.530 \\ \hline

3
& \textbf{0.870} & 0.912 & 0.837
& 0.867 & 0.911 & 0.832
& 0.564 & 0.564 & 0.564
& 0.638 & 0.638 & 0.638
& 0.581 & 0.579 & 0.584
& 0.738 & 0.754 & 0.725 \\ \hline

4
& \textbf{0.969} & 0.961 & 0.978
& \textbf{0.969} & 0.961 & 0.978
& 0.596 & 0.596 & 0.596
& 0.627 & 0.627 & 0.627
& 0.580 & 0.574 & 0.586
& 0.899 & 0.849 & 0.970 \\ \hline

\end{tabular}}
\caption{\footnotesize
Macro F$_1$, macro precision, and macro recall under casewise contamination
affecting one functional channel.}
\label{tab:oce_one}
\end{table*}

\subsection{Simulation II: Multivariate Functional Data with Missingness and Anomalies} \label{subsec:simulation2} To evaluate anomaly-detection performance under structured data corruption, we simulate a multivariate functional dataset inspired by \cite{gahrooei2021multiple}. The simulation contains \(N=200\) samples, each consisting of \(p=5\) correlated functional channels observed at \(T=100\) discretization points. Complete temporal trajectories are masked for selected sample--channel pairs, after which additive symmetric perturbations are introduced. Persistent rank-one mean-shift anomalies are injected into \(5\%\) of the samples, and their magnitude is varied across the five severity levels reported in Table~\ref{tab:simulation2_comparison}. Additional details of the data-generation procedure are provided in Appendix~\ref{app:SimulationIIDetails}.

Table~\ref{tab:simulation2_comparison} reports results from this simulation with similar benchmarks to simulation I. As we can see in ~\ref{tab:simulation2_comparison} FG-Lasso provides the strongest overall performance and achieves the highest macro F$_1$ score at four of the five severity levels. Averaged across all severity settings, FG-Lasso obtains a macro F$_1$ of 0.711, compared with
0.681 for ES-CP, 0.645 for PCA+OCSVM, 0.642 for PCA+LOF, 0.621 for PCA+IF, and 0.609 for TRPCA. The only exception occurs at the smallest severity, 0.25, where PCA+LOF and PCA+IF attain the highest macro F$_1$ of 0.579. FG-Lasso and PCA+OCSVM follow closely with macro F$_1$ scores of 0.578, indicating that the anomaly signal is difficult to distinguish from nominal variation at this severity.

As the anomaly magnitude increases, the benefit of modeling the tensor and fiber structures becomes more pronounced. At severity 0.50, FG-Lasso obtains a macro F$_1$ of 0.601, compared with 0.579 for ES-CP and PCA+LOF and 0.519 for TRPCA. At severity 1.00, FG-Lasso increases to 0.673, exceeding PCA+OCSVM, ES-CP, and TRPCA, which obtain 0.623, 0.598, and 0.523, respectively. This setting produces the largest separation between FG-Lasso and ES-CP, suggesting that the fiberwise penalty is especially beneficial when anomalies are detectable but remain partially obscured by trajectory masking, additive perturbations, and background variation.

Both proposed tensor methods perform strongly at the two highest severity levels. At severity 2.00, FG-Lasso and ES-CP obtain macro F$_1$ scores of 0.785 and 0.778, respectively, substantially exceeding TRPCA at 0.663 and the three PCA-based detectors at 0.684. At severity 3.00, their macro F$_1$ scores increase further to 0.917 and 0.904, whereas TRPCA reaches 0.823. Thus, although the performance gap between the two proposed methods becomes small for pronounced anomalies, both retain a clear advantage over the benchmarking approaches.

\begin{table*}[!htbp]
\centering
\setlength{\tabcolsep}{2.5pt}
\resizebox{\textwidth}{!}{%
\begin{tabular}{|c|ccc|ccc|ccc|ccc|ccc|ccc|}
\hline
\multirow{2}{*}{Severity}
& \multicolumn{3}{c|}{FG-Lasso}
& \multicolumn{3}{c|}{ES-CP}
& \multicolumn{3}{c|}{PCA+IF}
& \multicolumn{3}{c|}{PCA+LOF}
& \multicolumn{3}{c|}{PCA+OCSVM}
& \multicolumn{3}{c|}{TRPCA} \\ \cline{2-19}

& Macro F$_1$ & Macro Precision & Macro Recall
& Macro F$_1$ & Macro Precision & Macro Recall
& Macro F$_1$ & Macro Precision & Macro Recall
& Macro F$_1$ & Macro Precision & Macro Recall
& Macro F$_1$ & Macro Precision & Macro Recall
& Macro F$_1$ & Macro Precision & Macro Recall \\ \hline

0.25
& \textbf{0.58} & 0.56 & 0.61
& 0.55 & 0.54 & 0.56
& \textbf{0.58} & 0.58 & 0.58
& \textbf{0.58} & 0.58 & 0.58
& \textbf{0.58} & 0.56 & 0.61
& 0.52 & 0.52 & 0.52 \\ \hline

0.50
& \textbf{0.60} & 0.59 & 0.62
& 0.58 & 0.58 & 0.58
& 0.53 & 0.53 & 0.53
& 0.58 & 0.58 & 0.58
& 0.56 & 0.55 & 0.60
& 0.52 & 0.52 & 0.52 \\ \hline

1.00
& \textbf{0.67} & 0.67 & 0.68
& 0.60 & 0.58 & 0.69
& 0.58 & 0.58 & 0.58
& 0.58 & 0.58 & 0.58
& 0.62 & 0.60 & 0.67
& 0.52 & 0.52 & 0.52 \\ \hline

2.00
& \textbf{0.79} & 0.84 & 0.75
& 0.78 & 0.74 & 0.83
& 0.68 & 0.68 & 0.68
& 0.68 & 0.68 & 0.68
& 0.68 & 0.68 & 0.68
& 0.66 & 0.65 & 0.68 \\ \hline

3.00
& \textbf{0.92} & 0.94 & 0.90
& 0.90 & 0.87 & 0.94
& 0.74 & 0.74 & 0.74
& 0.79 & 0.79 & 0.79
& 0.78 & 0.74 & 0.83
& 0.82 & 0.78 & 0.88 \\ \hline

\end{tabular}}
\caption{\footnotesize
Macro F$_1$, macro precision, and macro recall for the six anomaly-detection methods in Simulation II across increasing anomaly severity levels. The simulation includes multivariate functional observations with missing entries, measurement noise, and anomalous deviations. 
}
\label{tab:simulation2_comparison}
\end{table*}

\section{Case Study}\label{sec:Case}

In this section, the proposed methodology is applied to monitor multichannel tonnage profile data collected from a multi-operation forging process. The forging machine used in this process consists of multiple dies, each performing a specific operation during a stroke. The tonnage proﬁle dataset in Figure \ref{fig:tonnageintro} was used in previous studies by \cite{paynabar2016change,paynabar2013analysis, lei2010automatic}. 

To monitor the process performance, four strain gauge sensors (C1, C2, C3, and C4) are mounted on the columns of the forging machine. These sensors measure the tonnage force exerted by the press uprights, resulting in a four-channel tonnage profile for each operation cycle. Each profile channel is 1200 data points long. A sample of 651 multichannel profiles was collected under various experimental conditions. This sample includes 306 in-control profiles (collected under normal production conditions) and five groups of 69 out-of-control profiles. Each out-of-control group corresponds to a specific faulty operation condition where a part is missing in one of the operation stations, such as piercing in Station 4 (referred to as the C1F4, C2F4, C3F4,and C4F4 group). The faulty conditions can affect individual profile channels differently, depending on the station's proximity to the sensors measuring the profiles.

Unlike the supervised approaches of Lei et al.~\cite{lei2010automatic} and Paynabar et al.~\cite{paynabar2013analysis}, which used subsets of labeled profiles for model training and testing, the present case study evaluates all methods in a fully unsupervised setting. It also differs from the Phase~I and Phase~II control-chart framework of Paynabar et al.~\cite{paynabar2016change}; rather than estimating control limits from a reference sample, the objective here is to identify anomalous profiles directly from the complete dataset.

Some fault conditions produce deviations that are visually evident without specialized modeling and were therefore excluded from the analysis. The resulting dataset contains 513 multichannel profiles: 306 normal profiles and 207 faulty profiles from three retained fault groups, with 69 profiles in each group. The profiles were arranged as a third-order tensor whose modes represent samples, sensor channels, and crank-angle positions. Each raw profile channel contained 1,201 crank-angle observations. Before model fitting, the first 1,200 observations were partitioned into 200 consecutive, nonoverlapping blocks of six observations, and each block was replaced by its mean; the final unmatched observation was discarded. This preprocessing produced the analysis tensor \(\mathcal X\in\mathbb R^{513\times200\times4}\), whose modes correspond to strokes, block-averaged crank-angle positions, and strain-gauge channels, respectively. FG-Lasso, ES-CP, and TRPCA were applied directly to this tensor. For comparison, three additional unsupervised baselines were considered by first extracting principal-component scores and then applying Isolation Forest, Local Outlier Factor, or One-Class SVM.

The quantitative results in Table~\ref{tab:model_comparison} show that the two proposed tensor-decomposition methods outperform both TRPCA and the three PCA-based anomaly detectors. Unlike the simulation studies, in which performance was evaluated separately
across multiple contamination mechanisms, fault patterns, and severity levels, the case study was formulated as a binary anomaly-detection problem. All observations associated with the different fault types were pooled into a single Faulty class, while the remaining observations were assigned to the Normal class. We therefore report class-specific precision, recall, and F$_1$ scores for both classes rather than relying only on macro-averaged
performance measures.
FG-Lasso provides the strongest overall performance, with an F$_1$ score of 0.89 for normal profiles and 0.80 for faulty profiles. It correctly classifies 299 of the 306 normal profiles, corresponding to a normal-class recall of 0.98 and a false-alarm rate of only 0.02. Its faulty-class precision is 0.95, indicating that nearly all profiles identified as anomalous are truly faulty, although its faulty-class recall of 0.69 shows that some subtle faults remain undetected.

ES-CP is the second-best method, attaining F$_1$ scores of 0.86 and 0.78 for the normal and faulty classes, respectively. Compared with FG-Lasso, ES-CP has lower faulty-class precision, 0.83 versus 0.95, but slightly higher faulty-class recall, 0.73 versus 0.69. Thus, ES-CP detects more of the faulty profiles, whereas FG-Lasso produces substantially fewer false alarms and false positive anomaly declarations. Their macro F$_1$ scores are 0.82 and 0.85, respectively, confirming a modest but consistent overall advantage for the structured fiberwise penalty.

Among the PCA-based methods, PCA+OCSVM performs best, with F$_1$ scores of 0.75 for normal profiles and 0.63 for faulty profiles. PCA+IF obtains corresponding values of 0.73 and 0.59, while PCA+LOF performs less favorably, with values of 0.63 and 0.45. These methods therefore improve only modestly, or not at all, upon TRPCA, whose normal- and faulty-class F$_1$ scores are 0.65 and 0.47. In particular, TRPCA and PCA+LOF detect only 94 of the 207 faulty profiles, corresponding to a faulty-class recall of 0.45. The weaker performance of the PCA-based detectors suggests that projecting the profiles onto an unstructured score space discards part of the localized sensor--time information needed to distinguish faulty operation cycles from normal process variability.

\begin{table}[!htbp]
    \centering
    \resizebox{0.78\textwidth}{!}{%
    \begin{tabular}{|l|l|c|c|c|c|c|c|}
        \hline
        \textbf{Class}
        & \textbf{Metric}
        & \textbf{FG-Lasso}
        & \textbf{ES-CP}
        & \textbf{PCA+IF}
        & \textbf{PCA+LOF}
        & \textbf{PCA+OCSVM}
        & \textbf{TRPCA} \\ \hline

        \multirow{3}{*}{Normal}
        & Precision
        & 0.82 & 0.83 & 0.73 & 0.63 & 0.75 & 0.64 \\

        & Recall
        & 0.98 & 0.90 & 0.73 & 0.63 & 0.75 & 0.66 \\

        & F$_1$ score
        & \textbf{0.89} & 0.86 & 0.73 & 0.63 & 0.75 & 0.65 \\ \hline

        \multirow{3}{*}{Faulty}
        & Precision
        & 0.95 & 0.83 & 0.59 & 0.45 & 0.63 & 0.48 \\

        & Recall
        & 0.69 & 0.73 & 0.59 & 0.45 & 0.63 & 0.45 \\

        & F$_1$ score
        & \textbf{0.80} & 0.78 & 0.59 & 0.45 & 0.63 & 0.47 \\ \hline
    \end{tabular}}
    \caption{\footnotesize
    Class-specific precision, recall, and F$_1$ scores for the six
    unsupervised anomaly-detection methods applied to the forging-process tonnage profiles.
    }
    \label{tab:model_comparison}
\end{table}

The low-rank reconstructions in Figure \ref{fig:Case_LowRank} reveal that FG-Lasso retains only the smooth, process driven tonnage patterns, while TRPCA still carries noticeable fault signatures, particularly around peak force instants, hinting at rank misallocation. Likewise, the sparse tensors in Figure \ref{fig:Case_Sparse} illustrate that FG-Lasso localises the faults into crisp, high-contrast patches confined to the affected channels, whereas TRPCA produces a more diffused anomaly map, masking subtle deviations. This confirms that enforcing group-sparsity along the time fibres helps separate transient shocks and operation-specific disturbances from the baseline press dynamics.

\begin{figure}[h]
    \centering
    \includegraphics[width=1\linewidth]{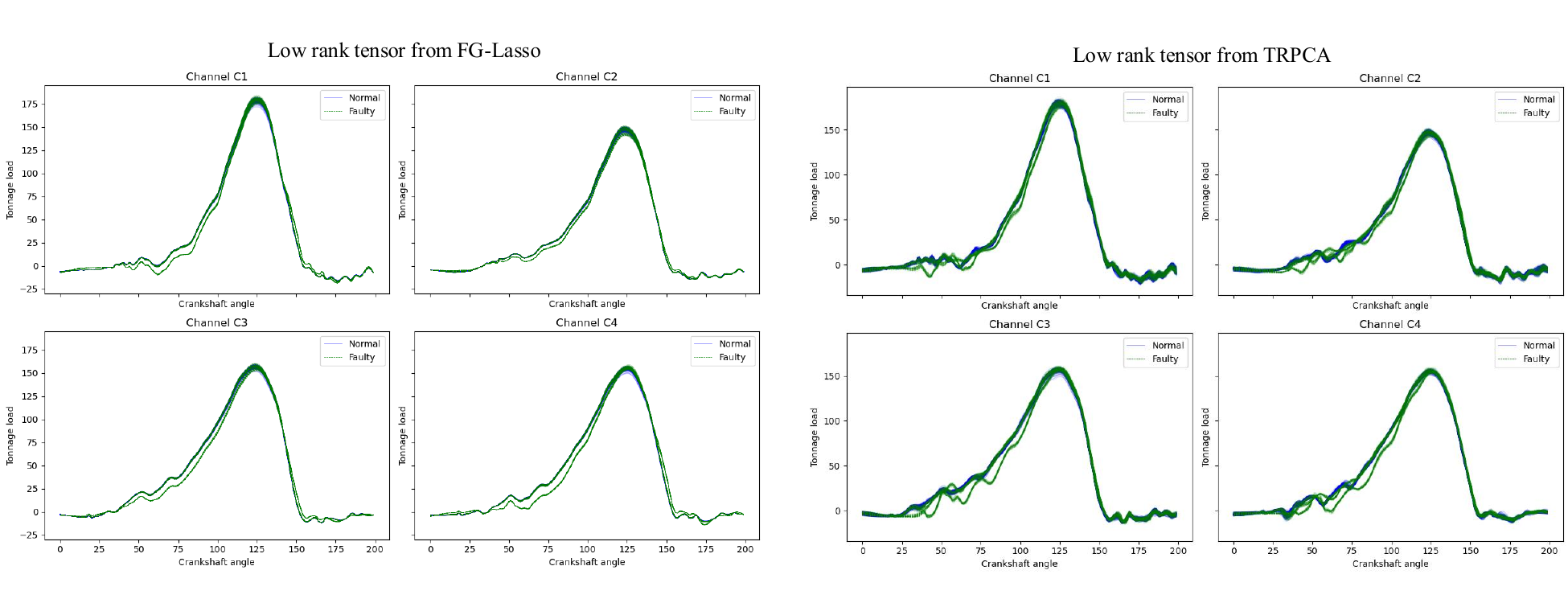}
    \caption{\scriptsize
        Low rank reconstruction of the four-sensor tonnage profiles obtained with FG-Lasso (left) and TRPCA (right).  
        }
    \label{fig:Case_LowRank}
\end{figure}

\begin{figure}[h]
    \centering
    \includegraphics[width=1\linewidth]{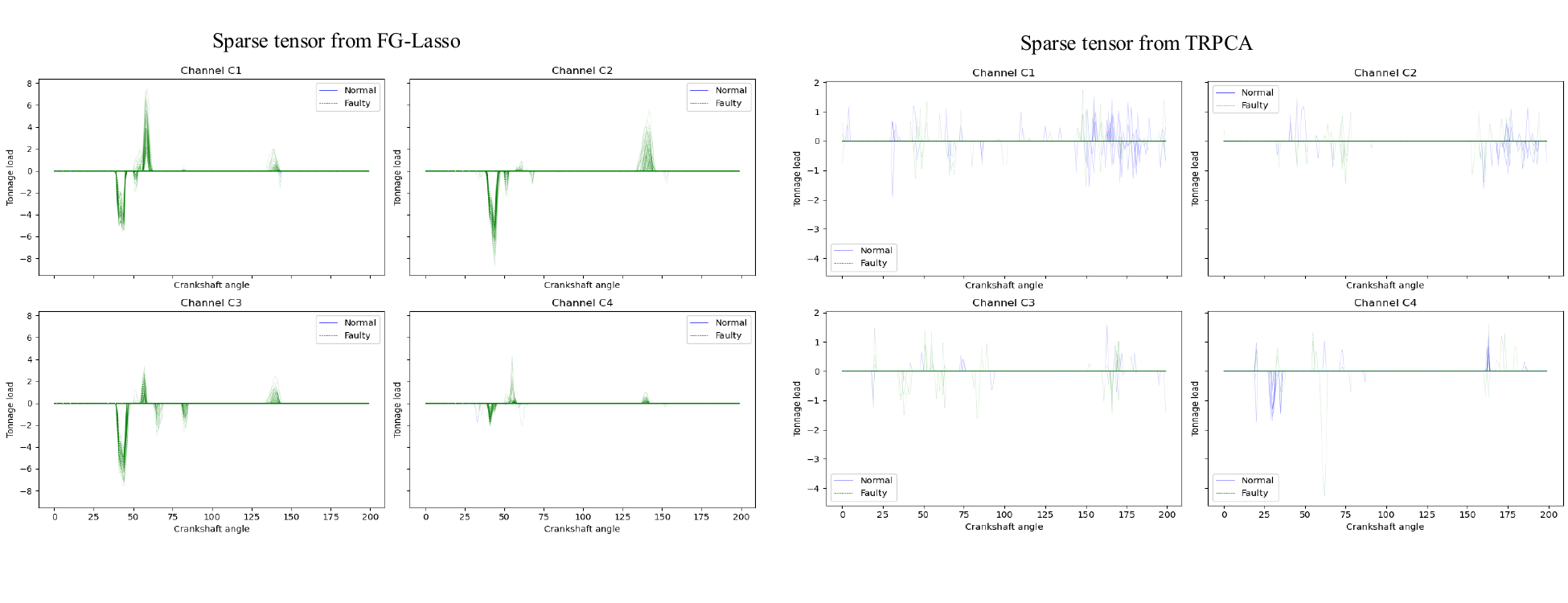}
    \caption{\scriptsize
        Sparse‐component reconstructions produced by FG–Lasso (left column) and TRPCA (right column) for the four tonnage-strain gauges C1 to C4. Each plot includes 513 strokes ($y$–axis) by 1200 crank-angle positions ($x$–axis);%
        }
    \label{fig:Case_Sparse}
\end{figure}

\section{Conclusion}\label{sec:Conclusion}

This paper develops two robust tensor decomposition methods for unsupervised anomaly detection in multivariate functional data. Both methods decompose the observed tensor into a low-rank component representing nominal process behavior and a sparse component capturing anomalous deviations. ES-CP imposes entrywise sparsity on the anomaly tensor and is therefore well suited to localized deviations, whereas FG-Lasso combines entrywise and fiberwise penalties to capture anomalies that persist along temporal trajectories or affect multiple sensor channels. Ridge regularization of the temporal factor is additionally used to stabilize estimation of the low-rank component.

The results show that the appropriate sparsity structure depends on the anticipated fault mechanism. ES-CP provides a competitive and relatively simple formulation for isolated or channel-specific anomalies. FG-Lasso retains this sensitivity while offering improved precision and stronger performance when anomalous deviations are organized within temporal fibers or distributed across multiple channels. The proposed framework therefore provides both effective anomaly detection and interpretable fault localization for high-dimensional industrial processes characterized by correlated, noisy, and incomplete functional measurements.

Future research may extend the proposed methods to online and streaming monitoring, where the low-rank and sparse components must be updated as new functional observations arrive. Other important directions include automatic selection of the CP rank and regularization parameters, uncertainty quantification for anomaly decisions, direct treatment of irregularly sampled or partially observed profiles, and extensions to heterogeneous multimodal process data.

\bibliographystyle{plainnat}
\bibliography{references}

@article{chiou2014multivariate,
  title={Multivariate functional principal component analysis: A normalization approach},
  author={Chiou, Jeng-Min and Chen, Yu-Ting and Yang, Ya-Fang},
  journal={Statistica Sinica},
  pages={1571--1596},
  year={2014},
  publisher={JSTOR}
}

@article{capezza2024robust,
  title={Robust multivariate functional control chart},
  author={Capezza, Christian and Centofanti, Fabio and Lepore, Antonio and Palumbo, Biagio},
  journal={Technometrics},
  pages={1--17},
  year={2024},
  publisher={Taylor \& Francis}
}

@article{centofanti2021functional,
  title={Functional regression control chart},
  author={Centofanti, Fabio and Lepore, Antonio and Menafoglio, Alessandra and Palumbo, Biagio and Vantini, Simone},
  journal={Technometrics},
  volume={63},
  number={3},
  pages={281--294},
  year={2021},
  publisher={Taylor \& Francis}
}

@inproceedings{schwab2012improving,
  title={Improving expert knowledge in dynamic process monitoring by symbolic regression},
  author={Schwab, Ingo and Senn, Melanie and Link, Norbert},
  booktitle={2012 Sixth International Conference on Genetic and Evolutionary Computing},
  pages={132--135},
  year={2012},
  organization={IEEE}
}

@article{paynabar2016change,
  title={A change-point approach for phase-I analysis in multivariate profile monitoring and diagnosis},
  author={Paynabar, Kamran and Zou, Changliang and Qiu, Peihua},
  journal={Technometrics},
  volume={58},
  number={2},
  pages={191--204},
  year={2016},
  publisher={Taylor \& Francis}
}

@article{lei2010automatic,
  title={Automatic tonnage monitoring for missing part detection in multi-operation forging processes},
  author={Lei, Yong and Zhang, Zhisheng and Jin, Jionghua},
  year={2010}
}

@article{paynabar2013analysis,
  title={Analysis of multichannel nonlinear profiles using uncorrelated multilinear principal component analysis with applications in fault detection and diagnosis},
  author={Paynabar, K and Jin, J and Pacella, M},
  journal={IIE Transactions},
  volume={45},
  number={11},
  pages={1235--1247},
  year={2013}
}

@article{gahrooei2021multiple,
  title={Multiple tensor-on-tensor regression: An approach for modeling processes with heterogeneous sources of data},
  author={Gahrooei, Mostafa Reisi and Yan, Hao and Paynabar, Kamran and Shi, Jianjun},
  journal={Technometrics},
  volume={63},
  number={2},
  pages={147--159},
  year={2021},
  publisher={Taylor \& Francis}
}

@book{kokoszka2017introduction,
  title={Introduction to functional data analysis},
  author={Kokoszka, Piotr and Reimherr, Matthew},
  year={2017},
  publisher={Chapman and Hall/CRC}
}

@article{ramsay2005principal,
  title={Principal components analysis for functional data},
  author={Ramsay, James O and Silverman, Bernard W},
  journal={Functional data analysis},
  pages={147--172},
  year={2005},
  publisher={Springer}
}

@article{menafoglio2018profile,
  title={Profile monitoring of probability density functions via simplicial functional PCA with application to image data},
  author={Menafoglio, Alessandra and Grasso, Marco and Secchi, Piercesare and Colosimo, Bianca Maria},
  journal={Technometrics},
  volume={60},
  number={4},
  pages={497--510},
  year={2018},
  publisher={Taylor \& Francis}
}

@article{gomez2020functional,
  title={Functional directed graphical models and applications in root-cause analysis and diagnosis},
  author={G{\'o}mez, Ana Mar{\'\i}a Estrada and Paynabar, Kamran and Pacella, Massimo},
  journal={Journal of Quality Technology},
  volume={53},
  number={4},
  pages={421--437},
  year={2020},
  publisher={Taylor \& Francis}
}

@article{fang2021multi,
  title={Multi-sensor prognostics modeling for applications with highly incomplete signals},
  author={Fang, Xiaolei and Yan, Hao and Gebraeel, Nagi and Paynabar, Kamran},
  journal={IISE Transactions},
  volume={53},
  number={5},
  pages={597--613},
  year={2021},
  publisher={Taylor \& Francis}
}

@article{yan2018real,
  title={Real-time monitoring of high-dimensional functional data streams via spatio-temporal smooth sparse decomposition},
  author={Yan, Hao and Paynabar, Kamran and Shi, Jianjun},
  journal={Technometrics},
  volume={60},
  number={2},
  pages={181--197},
  year={2018},
  publisher={Taylor \& Francis}
}

@article{jalilibal2024control,
  title={Control charting methods for monitoring high dimensional data streams: A conceptual classification scheme},
  author={Jalilibal, Zahra and Karavigh, Mohammad Hassan Ahmadi and Maleki, Mohammad Reza and Amiri, Amirhossein},
  journal={Computers \& Industrial Engineering},
  volume={191},
  pages={110141},
  year={2024},
  publisher={Elsevier}
}

@article{mou2023compressed,
  title={Compressed smooth sparse decomposition},
  author={Mou, Shancong and Shi, Jianjun},
  journal={INFORMS Journal on Data Science},
  volume={2},
  number={1},
  pages={60--80},
  year={2023},
  publisher={INFORMS}
}

@article{han2024guaranteed,
  title={Guaranteed functional tensor singular value decomposition},
  author={Han, Rungang and Shi, Pixu and Zhang, Anru R},
  journal={Journal of the American Statistical Association},
  volume={119},
  number={546},
  pages={995--1007},
  year={2024},
  publisher={Taylor \& Francis}
}

@article{yang2025anomaly,
  title={Anomaly detection in PV systems using constrained low-rank and sparse decomposition},
  author={Yang, Wei and Fregosi, Daniel and Bolen, Michael and Paynabar, Kamran},
  journal={IISE Transactions},
  volume={57},
  number={6},
  pages={607--620},
  year={2025},
  publisher={Taylor \& Francis}
}

@article{li2025real,
  title={Real-Time Monitoring of Dynamic Tensor Data with Longitudinal Patterns: A Tensor Graphical LASSO Approach},
  author={Li, Wendong and Li, Yifan and Tsung, Fugee and Wu, Chunjie},
  journal={Technometrics},
  number={just-accepted},
  pages={1--23},
  year={2025},
  publisher={Taylor \& Francis}
}

@article{qiu2023spatio,
  title={Spatio-temporal process monitoring using exponentially weighted spatial LASSO},
  author={Qiu, Peihua and Yang, Kai},
  journal={Journal of Quality Technology},
  volume={55},
  number={2},
  pages={163--180},
  year={2023},
  publisher={Taylor \& Francis}
}

@article{nabhan2021correlation,
  title={Correlation-based dynamic sampling for online high dimensional process monitoring},
  author={Nabhan, Mohammad and Mei, Yajun and Shi, Jianjun},
  journal={Journal of Quality Technology},
  volume={53},
  number={3},
  pages={289--308},
  year={2021},
  publisher={Taylor \& Francis}
}

@article{perdikis2019survey,
  author    = {Perdikis, Theodoros and Psarakis, Stelios},
  title     = {A survey on multivariate adaptive control charts: Recent developments and extensions},
  journal   = {Quality and Reliability Engineering International},
  volume    = {35},
  number    = {5},
  pages     = {1342--1362},
  year      = {2019},
  publisher = {Wiley},
  doi       = {10.1002/qre.2476}
}

@article{oyegoke2025hotelling,
  title={Hotelling T2 control chart based on minimum vector variance for monitoring high-dimensional correlated multivariate process},
  author={Oyegoke, Odunayo Adiat and Adekeye, Kayode Samuel and Olaomi, John Olutunji and Malela-Majika, Jean-Claude},
  journal={Quality and Reliability Engineering International},
  volume={41},
  number={2},
  pages={765--783},
  year={2025},
  publisher={Wiley Online Library}
}

@article{sun2024ewma,
  title={An EWMA chart for high dimensional process with multi-class out-of-control information via random forest learning},
  author={Sun, Mingze and Qian, Lei and Mukherjee, Amitava and Xiang, Dongdong},
  journal={Quality Technology \& Quantitative Management},
  volume={21},
  number={5},
  pages={722--748},
  year={2024},
  publisher={Taylor \& Francis}
}

@article{ahmed2025comprehensive,
  title={Comprehensive review of high-dimensional monitoring methods: trends, insights, and interconnections},
  author={Ahmed, Fuhad and Mahmood, Tahir and Riaz, Muhammad and Abbas, Nasir},
  journal={Quality Technology \& Quantitative Management},
  volume={22},
  number={4},
  pages={727--751},
  year={2025},
  publisher={Taylor \& Francis}
}

@article{fang2022online,
  title={Online monitoring of high-dimensional binary data streams with application to extreme weather surveillance},
  author={Fang, Zhiwen and Li, Wendong and Liu, Xin and Pu, Xiaolong and Xiang, Dongdong},
  journal={Journal of Applied Statistics},
  volume={49},
  number={16},
  pages={4122--4136},
  year={2022},
  publisher={Taylor \& Francis}
}

@article{ye2022generic,
  title={A generic online nonparametric monitoring and sampling strategy for high-dimensional heterogeneous processes},
  author={Ye, Honghan and Liu, Kaibo},
  journal={IEEE Transactions on Automation Science and Engineering},
  volume={19},
  number={3},
  pages={1503--1516},
  year={2022},
  publisher={IEEE}
}

@article{ye2024online,
  title={Online monitoring of high-dimensional asynchronous and heterogeneous data streams for shifts in location and scale},
  author={Ye, Honghan and Zheng, Ziqian and Cheng, Jing-Ru C and Hable, Brock and Liu, Kaibo},
  journal={International Journal of Production Research},
  volume={62},
  number={3},
  pages={720--736},
  year={2024},
  publisher={Taylor \& Francis}
}

@book{maronna2019robust,
  title={Robust statistics: theory and methods (with R)},
  author={Maronna, Ricardo A and Martin, R Douglas and Yohai, Victor J and Salibi{\'a}n-Barrera, Mat{\'\i}as},
  year={2019},
  publisher={John Wiley \& Sons}
}

@article{shen2022robust,
  title={Robust tensor decomposition based background/foreground separation in noisy videos and its applications in additive manufacturing},
  author={Shen, Bo and Kamath, Rakesh R and Choo, Hahn and Kong, Zhenyu},
  journal={IEEE Transactions on Automation Science and Engineering},
  volume={20},
  number={1},
  pages={583--596},
  year={2022},
  publisher={IEEE}
}

@article{gomez2022adaptive,
  title={An adaptive sampling strategy for online monitoring and diagnosis of high-dimensional streaming data},
  author={G{\'o}mez, Ana Mar{\'\i}a Estrada and Li, Dan and Paynabar, Kamran},
  journal={Technometrics},
  volume={64},
  number={2},
  pages={253--269},
  year={2022},
  publisher={Taylor \& Francis}
}

@article{wang2022sgl,
  title={SGL-PCA: Health index construction with sensor sparsity and temporal monotonicity for mixed high-dimensional signals},
  author={Wang, Feng and Wang, Andi and Tang, Tao and Shi, Jianjun},
  journal={IEEE Transactions on Automation Science and Engineering},
  volume={20},
  number={1},
  pages={372--384},
  year={2022},
  publisher={IEEE}
}

@article{wang2025tensor,
  title={Tensor-on-tensor regression neural networks for process modeling with high-dimensional data},
  author={Wang, Qian and Bisheh, Mohammad N and Paynabar, Kamran},
  journal={arXiv preprint arXiv:2510.05329},
  year={2025}
}

\bigskip
\begin{center}
{\large\bf SUPPLEMENTARY MATERIAL }

\end{center}

\begin{description}

\item[Appendix A]:
\subsection{Proof of Proposition~\ref{prop:escp_soft}}
\label{app:proof_elcp_soft_threshold}

For fixed \(\mathcal L\), define
\[
    \mathcal R=\mathcal X-\mathcal L.
\]
The sparse-component update in \eqref{eq:escp_s_subproblem} is
\[
    \min_{\mathcal S}
    \left\{
        \|\mathcal R-\mathcal S\|_F^2
        +
        \alpha_1\|\mathcal S\|_1
    \right\}.
\]
Because both terms are separable across the entries of \(\mathcal S\), the
problem reduces to independently solving, for each
\((i,j,k)\),
\[
    \min_{s\in\mathbb R}
    \left\{
        (r-s)^2+\alpha_1|s|
    \right\},
\]
where \(r=R_{ijk}\).

The scalar objective is strictly convex because its quadratic term is strictly
convex. Hence, it admits a unique minimizer. Its subgradient optimality
condition is
\[
    0
    \in
    2(s-r)+\alpha_1\partial |s|.
\]

We consider the following three cases.

If \(s>0\), then \(\partial |s|=\{1\}\), and the optimality condition gives
\[
    2(s-r)+\alpha_1=0,
\]
so that
\[
    s=r-\frac{\alpha_1}{2}.
\]
This solution is positive only when
\[
    r>\frac{\alpha_1}{2}.
\]

If \(s<0\), then \(\partial |s|=\{-1\}\), and the optimality condition gives
\[
    2(s-r)-\alpha_1=0,
\]
so that
\[
    s=r+\frac{\alpha_1}{2}.
\]
This solution is negative only when
\[
    r<-\frac{\alpha_1}{2}.
\]

Finally, if \(s=0\), then
\(\partial |s|=[-1,1]\), and the optimality condition becomes
\[
    0
    \in
    -2r+\alpha_1[-1,1],
\]
which is equivalent to
\[
    |r|
    \leq
    \frac{\alpha_1}{2}.
\]

Combining the three cases yields
\[
    s^\star
    =
    \begin{cases}
        r-\dfrac{\alpha_1}{2},
        & r>\dfrac{\alpha_1}{2},
        \\[6pt]
        0,
        & |r|\leq\dfrac{\alpha_1}{2},
        \\[6pt]
        r+\dfrac{\alpha_1}{2},
        & r<-\dfrac{\alpha_1}{2},
    \end{cases}
\]
or, equivalently,
\[
    s^\star
    =
    \operatorname{sign}(r)
    \left(
        |r|-\frac{\alpha_1}{2}
    \right)_+.
\]

Applying this result entrywise gives
\[
    S_{ijk}^{\star}
    =
    \operatorname{sign}(R_{ijk})
    \left(
        |R_{ijk}|-\frac{\alpha_1}{2}
    \right)_+
    =
    \operatorname{soft}_{\alpha_1/2}(R_{ijk}),
\]
which proves Proposition~\ref{prop:escp_soft}.
\hfill\(\square\)

\subsection{Proof of Proposition~\ref{prop:fgcp_prox}}
\label{app:fgcp_prox_proof}

\begin{proof}
For fixed \(\mathcal L\), define the residual tensor
\[
    \mathcal R
    =
    \mathcal X-\mathcal L.
\]
The sparse-component subproblem in
\eqref{eq:fgcp_s_subproblem} is
\begin{equation}
\begin{aligned}
\min_{\mathcal S}
\quad
&
\|\mathcal R-\mathcal S\|_F^2
+
\alpha_1\gamma\|\mathcal S\|_1
\\
&\quad+
\alpha_1(1-\gamma)
\sum_{i=1}^{I}
\sum_{j=1}^{J}
\|\mathcal S_{ij:}\|_2.
\end{aligned}
\label{eq:fgcp_proof_tensor_problem}
\end{equation}
The temporal-factor penalty
\(\alpha_2\|\mathbf A^{(3)}\|_F^2\) does not appear in
\eqref{eq:fgcp_proof_tensor_problem} because it is constant with respect to
\(\mathcal S\) when \(\mathcal L\) is fixed.

Recall the threshold definitions
\[
    \tau_{\mathrm e}
    =
    \frac{\alpha_1\gamma}{2},
    \qquad
    \tau_{\mathrm g}
    =
    \frac{\alpha_1(1-\gamma)}{2}.
\]
Multiplying the objective in
\eqref{eq:fgcp_proof_tensor_problem} by the positive constant \(1/2\) does
not change its minimizer. Therefore, the problem is equivalent to
\begin{equation}
\min_{\mathcal S}
\left\{
    \frac{1}{2}
    \|\mathcal R-\mathcal S\|_F^2
    +
    \tau_{\mathrm e}\|\mathcal S\|_1
    +
    \tau_{\mathrm g}
    \sum_{i=1}^{I}
    \sum_{j=1}^{J}
    \|\mathcal S_{ij:}\|_2
\right\}.
\label{eq:fgcp_proof_scaled_problem}
\end{equation}

The objective in \eqref{eq:fgcp_proof_scaled_problem} can be written as
\begin{equation}
\sum_{i=1}^{I}
\sum_{j=1}^{J}
\left[
    \frac{1}{2}
    \|\mathcal R_{ij:}-\mathcal S_{ij:}\|_2^2
    +
    \tau_{\mathrm e}\|\mathcal S_{ij:}\|_1
    +
    \tau_{\mathrm g}\|\mathcal S_{ij:}\|_2
\right].
\label{eq:fgcp_proof_separable}
\end{equation}
Hence, the optimization is separable across the temporal fibers indexed by
\((i,j)\). It is therefore sufficient to solve the fiber-level problem
\begin{equation}
    \mathbf s^\star
    =
    \argmin_{\mathbf s\in\mathbb R^K}
    \left\{
        \frac{1}{2}\|\mathbf z-\mathbf s\|_2^2
        +
        \tau_{\mathrm e}\|\mathbf s\|_1
        +
        \tau_{\mathrm g}\|\mathbf s\|_2
    \right\},
\label{eq:fgcp_proof_fiber_problem}
\end{equation}
where
\[
    \mathbf z=\mathcal R_{ij:},
    \qquad
    \mathbf s=\mathcal S_{ij:}.
\]

The objective in \eqref{eq:fgcp_proof_fiber_problem} is strongly convex
because its quadratic term is strongly convex and both norm penalties are
convex. Consequently, it admits a unique minimizer. Its subgradient
optimality condition is
\begin{equation}
    \mathbf 0
    \in
    \mathbf s-\mathbf z
    +
    \tau_{\mathrm e}\partial\|\mathbf s\|_1
    +
    \tau_{\mathrm g}\partial\|\mathbf s\|_2.
\label{eq:fgcp_proof_optimality}
\end{equation}

Define the entrywise soft-thresholded vector
\begin{equation}
    \mathbf u
    =
    \operatorname{soft}_{\tau_{\mathrm e}}(\mathbf z).
\label{eq:fgcp_proof_u}
\end{equation}
By the optimality condition of the entrywise soft-thresholding operator,
there exists a vector \(\mathbf v\in\mathbb R^K\) such that
\begin{equation}
    \mathbf z
    =
    \mathbf u+\tau_{\mathrm e}\mathbf v,
    \qquad
    \mathbf v\in\partial\|\mathbf u\|_1.
\label{eq:fgcp_proof_soft_identity}
\end{equation}
In particular,
\[
    \|\mathbf v\|_\infty\leq 1.
\]

We now consider two cases.

\paragraph{Case 1: \(\|\mathbf u\|_2\leq\tau_{\mathrm g}\).}

We show that \(\mathbf s^\star=\mathbf 0\). Recall that
\[
    \partial\|\mathbf 0\|_1
    =
    \left\{
        \mathbf v\in\mathbb R^K:
        \|\mathbf v\|_\infty\leq1
    \right\}
\]
and
\[
    \partial\|\mathbf 0\|_2
    =
    \left\{
        \mathbf w\in\mathbb R^K:
        \|\mathbf w\|_2\leq1
    \right\}.
\]

If \(\tau_{\mathrm g}>0\), define
\[
    \mathbf w
    =
    \frac{\mathbf u}{\tau_{\mathrm g}}.
\]
Because
\(\|\mathbf u\|_2\leq\tau_{\mathrm g}\), we have
\(\|\mathbf w\|_2\leq1\), and hence
\(\mathbf w\in\partial\|\mathbf 0\|_2\). Furthermore,
\(\|\mathbf v\|_\infty\leq1\) implies
\(\mathbf v\in\partial\|\mathbf 0\|_1\). From
\eqref{eq:fgcp_proof_soft_identity},
\[
    \mathbf z
    =
    \tau_{\mathrm e}\mathbf v
    +
    \tau_{\mathrm g}\mathbf w.
\]
Therefore,
\[
    \mathbf 0
    \in
    -\mathbf z
    +
    \tau_{\mathrm e}\partial\|\mathbf 0\|_1
    +
    \tau_{\mathrm g}\partial\|\mathbf 0\|_2,
\]
which is the optimality condition
\eqref{eq:fgcp_proof_optimality} evaluated at
\(\mathbf s=\mathbf 0\).

If \(\tau_{\mathrm g}=0\), the condition
\(\|\mathbf u\|_2\leq\tau_{\mathrm g}\) implies
\(\mathbf u=\mathbf 0\). Equation
\eqref{eq:fgcp_proof_soft_identity} then gives
\[
    \mathbf z
    =
    \tau_{\mathrm e}\mathbf v,
\]
and the same optimality condition holds at
\(\mathbf s=\mathbf 0\).

Thus,
\begin{equation}
    \mathbf s^\star
    =
    \mathbf 0
    \qquad\text{when}\qquad
    \|\mathbf u\|_2\leq\tau_{\mathrm g}.
\label{eq:fgcp_proof_zero_case}
\end{equation}

\paragraph{Case 2: \(\|\mathbf u\|_2>\tau_{\mathrm g}\).}

Define
\begin{equation}
    c
    =
    1-\frac{\tau_{\mathrm g}}{\|\mathbf u\|_2},
    \qquad
    \mathbf s^\star
    =
    c\mathbf u.
\label{eq:fgcp_proof_nonzero_candidate}
\end{equation}
Because \(\|\mathbf u\|_2>\tau_{\mathrm g}\), we have \(c>0\).
Therefore, \(\mathbf s^\star\) and \(\mathbf u\) have the same support and
the same signs. It follows that the vector \(\mathbf v\) in
\eqref{eq:fgcp_proof_soft_identity} also satisfies
\[
    \mathbf v
    \in
    \partial\|\mathbf s^\star\|_1.
\]

Moreover,
\[
    \|\mathbf s^\star\|_2
    =
    c\|\mathbf u\|_2
    =
    \|\mathbf u\|_2-\tau_{\mathrm g},
\]
and
\[
    \frac{\mathbf s^\star}{\|\mathbf s^\star\|_2}
    =
    \frac{\mathbf u}{\|\mathbf u\|_2}.
\]
Consequently,
\begin{align}
    \mathbf s^\star
    +
    \tau_{\mathrm g}
    \frac{\mathbf s^\star}{\|\mathbf s^\star\|_2}
    &=
    \left(
        1-\frac{\tau_{\mathrm g}}{\|\mathbf u\|_2}
    \right)\mathbf u
    +
    \tau_{\mathrm g}
    \frac{\mathbf u}{\|\mathbf u\|_2}
    \nonumber\\
    &=
    \mathbf u.
\label{eq:fgcp_proof_group_identity}
\end{align}

Combining \eqref{eq:fgcp_proof_soft_identity} and
\eqref{eq:fgcp_proof_group_identity} yields
\[
    \mathbf z
    =
    \mathbf s^\star
    +
    \tau_{\mathrm g}
    \frac{\mathbf s^\star}{\|\mathbf s^\star\|_2}
    +
    \tau_{\mathrm e}\mathbf v.
\]
Equivalently,
\[
    \mathbf 0
    =
    \mathbf s^\star-\mathbf z
    +
    \tau_{\mathrm e}\mathbf v
    +
    \tau_{\mathrm g}
    \frac{\mathbf s^\star}{\|\mathbf s^\star\|_2}.
\]
Since
\[
    \mathbf v
    \in
    \partial\|\mathbf s^\star\|_1
\]
and
\[
    \frac{\mathbf s^\star}{\|\mathbf s^\star\|_2}
    \in
    \partial\|\mathbf s^\star\|_2,
\]
the candidate \(\mathbf s^\star\) satisfies
\eqref{eq:fgcp_proof_optimality}. Hence,
\begin{equation}
    \mathbf s^\star
    =
    \left(
        1-\frac{\tau_{\mathrm g}}{\|\mathbf u\|_2}
    \right)\mathbf u
    \qquad\text{when}\qquad
    \|\mathbf u\|_2>\tau_{\mathrm g}.
\label{eq:fgcp_proof_nonzero_case}
\end{equation}

Combining \eqref{eq:fgcp_proof_zero_case} and
\eqref{eq:fgcp_proof_nonzero_case} gives
\begin{equation}
    \mathbf s^\star
    =
    \begin{cases}
        \mathbf 0,
        &
        \|\mathbf u\|_2\leq\tau_{\mathrm g},
        \\[6pt]
        \displaystyle
        \left(
            1-\frac{\tau_{\mathrm g}}{\|\mathbf u\|_2}
        \right)\mathbf u,
        &
        \|\mathbf u\|_2>\tau_{\mathrm g},
    \end{cases}
\label{eq:fgcp_proof_piecewise_solution}
\end{equation}
where
\[
    \mathbf u
    =
    \operatorname{soft}_{\tau_{\mathrm e}}(\mathbf z).
\]
Equivalently,
\begin{equation}
    \mathbf s^\star
    =
    \left(
        1-
        \frac{\tau_{\mathrm g}}
        {
            \left\|
                \operatorname{soft}_{\tau_{\mathrm e}}(\mathbf z)
            \right\|_2
        }
    \right)_+
    \operatorname{soft}_{\tau_{\mathrm e}}(\mathbf z),
\label{eq:fgcp_proof_compact_solution}
\end{equation}
with the convention that the right-hand side is
\(\mathbf 0\) when
\(\operatorname{soft}_{\tau_{\mathrm e}}(\mathbf z)=\mathbf 0\).

Applying this solution independently to every residual fiber
\(\mathcal R_{ij:}\) yields
\[
    \mathcal S_{ij:}^{\star}
    =
    \left(
        1-
        \frac{\tau_{\mathrm g}}
        {\|\mathbf u_{ij}\|_2}
    \right)_+
    \mathbf u_{ij},
    \qquad
    \mathbf u_{ij}
    =
    \operatorname{soft}_{\tau_{\mathrm e}}
    \left(
        \mathcal R_{ij:}
    \right),
\]
which is the unique minimizer of
\eqref{eq:fgcp_s_subproblem}. This completes the proof.
\end{proof}

\subsection{Details of Simulation I}
\label{app:SimulationIDetails}

The correlation structure is defined over the domain \( T = [0, 1] \), with the Bessel correlation function of the first kind (Abramowitz and Stegun, 1964) used for the diagonal entries \( G_{ll}(s, t) \):
\[
G_{ll}(s, t) = (|s - t|^{\rho/2})^\nu \sum_{k=0}^{\infty} \frac{(-(|s - t|^\rho)^2/4)^k}{k! \Gamma(\nu + k + 1)},
\]
where \(\rho = 32\) and \(\nu = 0\). Eigenvalues \(\{\eta_k\}\) and eigenfunctions \(\{\vartheta_k\}\) are computed, and the off-diagonal cross-correlation \( G_{lj}(t_1, t_2) \) for \( l \neq j \) is given by:
\[
G_{lj} (t_1, t_2) = \sum_{k=1}^{\infty} \frac{\eta_k}{1 + |l - j|/\rho_{\text{cor}}} \vartheta_k (t_1) \vartheta_k (t_2), \quad \rho_{\text{cor}} = 1.
\]
The full covariance matrix \( G \) is spectrally decomposed to yield eigenvalues \(\{\lambda_i\}\) and eigenfunctions \(\{\psi_i\}\), truncated at \( L^* = 225 \) based on eigenvalue significance. Functional data is then synthesized as:
\[
Z(t) = \sum_{i=1}^{L^*} \xi_i \psi_i(t), \quad \xi_i \sim \mathcal{N}(0, \lambda_i),
\]
and combined with a mean function \( m(t) \), defined to simulate dynamic resistance curves (DRCs) in resistance spot welding:
\begin{align*}
m(t) &= 0.2074 + 0.3117 \exp(-371.4 h(t)) + \\ 
&\quad 0.5284(1 - \exp(0.8217 h(t))) \\
     &\quad - 423.3 \left[1 + \tanh(-26.15(h(t) + 0.1715))\right]
\end{align*}

where \( h(t) = (t - 0.0045)/(0.15 - 0.0045) \), as proposed in \cite{schwab2012improving}.

To simulate outliers, we define two contamination functions:
\[
CE(t) = 
\begin{cases} 
-M_E(t - 0.5) & \text{if } t < 0.5, \\
-2M_E(t - 0.5) & \text{if } t \ge 0.5,
\end{cases}
\]
\begin{align*}
CP(t) &= -m(t) - \left(\frac{M_P}{5}\right)t + 0.45\left(\frac{M_P}{5}\right) + 0.2074 \\
      &\quad + 0.3117 \exp(-371.4h(t)) + 0.5284(1 - \exp(0.8217h(t))) \\
      &\quad - 423.3 \left[1 + \tanh(-26.15(h(t) + 0.1715))\right],
\end{align*}
with \( h(t) = (0.0045 + t(0.15 - 0.0045))^{1+M_P/0.6} \). The final contaminated sample is given by:
\[
X(t) = m(t) + Z(t) \sigma + \varepsilon(t) + B_{\text{CaE}}CE(t) + B_{\text{CaP}}CP(t), \quad t \in T,
\]
where \( \sigma = 0.0125 \), \( \varepsilon(t) \sim \mathcal{N}(0, 0.0025^2) \), and \( B_{\text{CaE}}, B_{\text{CaP}} \sim \text{Bernoulli}(p_{\text{CaE}}, p_{\text{CaP}}) \) determine the activation of each contamination. These are applied to vectors \( CE = (B_{1,\text{CeE}}CE, \ldots, B_{p,\text{CeE}}CE)^\top \) and \( CP = (B_{1,\text{CeP}}CP, \ldots, B_{p,\text{CeP}}CP)^\top \), with independently sampled Bernoulli indicators. The functional data are discretely observed at 100 uniformly spaced time points. Both componentwise and casewise contamination scenarios are considered, affecting either individual or all signal channels per sample. Figure \ref{fig:Simulation data} shows an example of this simulation.

\vspace{5pt}

\subsection{Details of Simulation II}
\label{app:SimulationIIDetails}

Each simulated sample is represented by a matrix
\(\mathcal X_{0,i::}\in\mathbb R^{p\times T}\), where \(p\) is the number of
functional channels and \(T\) is the number of discretization points. To
generate the nominal profiles, let
\(\mathbf W_i\in\mathbb R^{T\times p}\) contain \(p\) independent Gaussian
process realizations with covariance function
\begin{equation}
    \Sigma(q,q')
    =
    \exp\left\{
        -\left|2(q-q')\right|^2
    \right\}.
    \label{eq:sim2_gp_kernel}
\end{equation}
Cross-channel dependence is introduced through the compound-symmetry
covariance matrix
\begin{equation}
    \mathbf C
    =
    \begin{bmatrix}
        1      & \rho_c & \cdots & \rho_c \\
        \rho_c & 1      & \cdots & \rho_c \\
        \vdots & \vdots & \ddots & \vdots \\
        \rho_c & \rho_c & \cdots & 1
    \end{bmatrix}
    =
    \boldsymbol\Delta\boldsymbol\Delta^\top.
    \label{eq:sim2_channel_covariance}
\end{equation}
The nominal multichannel observation is then generated as
\begin{equation}
    \mathcal X_{0,i::}
    =
    \left(
        \mathbf W_i\boldsymbol\Delta^\top
    \right)^\top.
    \label{eq:sim2_nominal_sample}
\end{equation}
We use \(N=200\), \(p=5\), \(T=100\), and
\(\rho_c=0.75\).

Trajectory-level masking is introduced independently for each channel. For
each \(j=1,\ldots,p\), \(5\%\) of the samples are selected without
replacement, and their complete temporal trajectories in channel \(j\) are
set to zero. Let
\(\mathcal M\in\{0,1\}^{N\times p\times T}\) denote the resulting mask, where
\(\mathcal M_{ij:}=\mathbf 0\) for a selected sample--channel pair and
\(\mathcal M_{ij:}=\mathbf 1\) otherwise. Thus, the masking occurs at the
sample--channel fiber level rather than independently at individual tensor
entries.

Synthetic anomalies are introduced into \(5\%\) of the samples. For each
selected anomalous sample \(i\), a channel-loading vector
\(\mathbf u_i\in\mathbb R^p\) is generated from a standard multivariate
Gaussian distribution and normalized such that
\(\|\mathbf u_i\|_2=1\). A temporal mean-shift pattern
\(\mathbf v_i(\delta)\in\mathbb R^T\) is defined as
\begin{equation}
    v_{ik}(\delta)
    =
    \begin{cases}
        0,
        & k < \kappa_i,\\
        \delta,
        & k \geq \kappa_i,
    \end{cases}
    \label{eq:sim2_temporal_anomaly}
\end{equation}
where the change point \(\kappa_i\) is selected randomly after approximately
the first third of the discretization domain. The anomaly matrix for sample
\(i\) is
\begin{equation}
    \mathcal A_{i::}(\delta)
    =
    \mathbf u_i\mathbf v_i(\delta)^\top.
    \label{eq:sim2_rankone_anomaly}
\end{equation}
This construction produces a rank-one anomaly across the channel and temporal
modes. All five channels share the same temporal anomaly support, while their
shift magnitudes and directions are determined by the entries of
\(\mathbf u_i\). For samples not selected as anomalous,
\(\mathcal A_{i::}(\delta)=\mathbf 0\).

The anomaly is added before the trajectory mask is applied. Additive symmetric
perturbations are subsequently introduced, so that the tensor supplied to the
anomaly-detection methods is
\begin{equation}
    \mathcal X
    =
    \left(
        \mathcal X_0+\mathcal A(\delta)
    \right)
    \odot\mathcal M
    +
    \mathcal E,
    \label{eq:sim2_observed_tensor}
\end{equation}
where \(\odot\) denotes elementwise multiplication. In the numerical
experiment, \(98\%\) of the entries are selected without replacement and
perturbed by either \(+a\) or \(-a\), with equal proportions, where
\begin{equation}
    a
    =
    0.8
    \sqrt{\frac{0.92}{0.98}}.
    \label{eq:sim2_noise_amplitude}
\end{equation}
The results reported in Table~\ref{tab:simulation2_comparison} correspond to
\[
    \delta
    \in
    \{0.25,\,0.50,\,1.00,\,2.00,\,3.00\}.
\]
A sample-level anomaly label is assigned according to whether that sample was
selected for anomaly injection; the binary labels are not interpreted as the
exact entrywise support of the anomaly.

\end{description}

\end{document}